\numberwithin{equation}{section}
\newtheorem{theorem}{Theorem}[section]
\newtheorem{corollary}{Corollary}[section]
\newtheorem{lemma}{Lemma}[section]
\newcommand{ \rmL} {{\rm{L}}}
\newcommand{\bbeta}{\mbox{\boldmath{$\beta$}}}
\newcommand{\balpha}{\mbox{\boldmath{$\alpha$}}}
\begin{document}

\begin{frontmatter}

% "Title of the Paper"
\title{The theory and application of penalized
 methods
 \\
 or
 \\
  Reproducing
Kernel Hilbert Spaces made easy}
\date{November 4 2011}
\thankstext{t1}{The author wishes to acknowledge the support of the Natural Sciences
and Engineering Research Council of Canada through grant A7969.}
\runtitle{RKHS Made Easy}

% indicate corresponding author with \corref{}
% \author{\fnms{John} \snm{Smith}\thanksref{t2}\corref{}\ead[label=e1]{smith@foo.com}\ead%[label=e2,url]{www.foo.com}}
% \thankstext{t2}

\author{\fnms{Nancy} \snm{Heckman}\ead[label=e1]{nancy@stat.ubc.ca}\protect\thanksref{t1}}

 \address{Department of Statistics\\ The University of British Columbia\\
 Vancouver BC Canada\\ \printead{e1}}

\runauthor{N Heckman}

\begin{abstract}
The popular cubic smoothing spline estimate of a regression function arises as the 
minimizer of the penalized sum of squares $\sum_j(Y_j - \mu(t _j))^2 +  \lambda \int_a^b [ \mu''(t)]^2~dt$,  
where  the data are $t_j,Y_j$, $j=1,\ldots, n$.  The minimization is taken over an 
infinite-dimensional function space, the space of all functions with square integrable 
second derivatives.  But the calculations can be carried out in a finite-dimensional space.
The reduction from minimizing over an infinite dimensional space to minimizing over a finite 
dimensional space occurs for more general objective functions:  the data  may be related to 
the function $\mu$  in another way,  the sum of squares may be replaced by a more suitable 
expression, or the penalty, $\int_a^b [ \mu''(t)]^2~dt$,  might take  a different form.  
This paper reviews the Reproducing Kernel Hilbert Space structure that provides
a finite-dimensional solution for a general minimization problem.   
Particular attention is paid to penalties based on linear differential operators.  In this case, 
one can sometimes easily calculate the minimizer explicitly, using Green's functions.
 \end{abstract}

\begin{keyword}[class=AMS]
\kwd[Primary ]{62G99}
\kwd{46E22}
\kwd[; secondary ]{62G08}
\end{keyword}

\begin{keyword}
\kwd{Penalized likelihood}
\kwd{Reproducing Kernel Hilbert Space}
\kwd{Splines}
\end{keyword}
\tableofcontents

\end{frontmatter}

%%%%%%%%%%%%%%%%%%%%%%%%%%%%%%%%%%%%%%%%%%%%

\section{Introduction}

A Reproducing Kernel Hilbert Space (RKHS) provides a practical and elegant structure to
solve optimization problems in function spaces.    
This article considers the use of an RKHS to analyze the data 
$Y_1,\ldots Y_n \in \Re$ and $t_1,\ldots, t_n \in \Re^p$.  The distribution of the $Y_i$'s depends
on $\mu$, a function of $t \in \Re^p$, which is usually assumed to be smooth.   The goal is to  find  $\mu$ 
in a specified function space ${\cal{H}}$ 
to minimize
\begin{equation}
\label{eq:general}
G(t_1, \ldots, t_n, Y_1,\ldots,Y_n, F_1(\mu),\ldots, F_n(\mu))  + \lambda P(\mu)
\end{equation}
where   $G$ and the $F_j$'s are known,  $P$ is a known penalty on $\mu$,  and $\lambda$ serves
to balance the importance between $G$ and $P$.  
Typically, $F_j(\mu) = \mu(t_j)$ and $P(\mu)$ is based on derivatives of $\mu$.   Some results here will concern
general $P$ and $t_j\in \Re^p$, $p \ge 1$,  and some more extensive results will concern  $t_j \in [a,b] \subset \Re$ 
and $P$ generated from a differential operator $\rmL$:
\begin{equation}
\label{eq:differentialP}
P(\mu) = 
\int_a^b [ (\rmL\mu)(t)]^2 ~dt
{\rm{~~~where ~~~}}
(\rmL \mu)(t) = \mu^{(m)}(t) + \sum_{j=0}^{m-1} w_j(t) \mu^{(j)}(t)
\end{equation}
with $w_j$ real-valued and continuous.    For this type of penalty, we restrict $\mu$ to lie in
the space
\[
{\cal{H}}^m[a,b]=
\{f : [a; b] \to \Re  : \mu^{(j)}; j = 0; \cdots , m -  1 {\rm{~ are~ absolutely~ continuous}}
\]
\[
{\rm{and~}} \int_a^b [ \mu^{(m)}(t)]^2 ~dt < \infty  \}.
\]
Note that,  for all $\mu \in {\cal{H}}^m[a,b] $, $\int_a^b [ (\rmL\mu)(t)]^2 ~dt$ is well defined: 
L$\mu(t)$ exists almost everywhere $t$ and L$\mu$ is square
integrable, since the $\omega_j$'s are continuous and $[a, b]$ is finite.

The most well-known case of (\ref{eq:general}) 
occurs in regression analysis, when we seek  the regression function $\mu \in {\cal{H}}^2[a,b]$ to minimize
\begin{equation}
\label{eq:splines}
\sum_j [Y_j - \mu(t _j)]^2 +  \lambda \int_a^b [ \mu''(t)]^2~dt.
\end{equation}
The  minimizing $\mu$ is a cubic smoothing spline, a popular regression function estimate.
The non-negative smoothing parameter 
$\lambda$  balances  the minimizing $\mu$'s  fit to the data (via minimizing
$ \sum_j[Y_j - \mu(t _j)]^2 $) with  its
closeness to a straight line (achieved when
$  \int_a^b [ \mu''(t)]^2~dt = 0$).  
The value of $\lambda$ is typically chosen ``by eye" -- by examining the resulting estimates of $\mu$,
or by some automatic data-driven method such as cross-validation.
 See, for instance, Wahba \cite{Wahba}, Eubank \cite{Eubank} or Green and Silverman \cite{Green.and.Silverman}.

To extend (\ref{eq:splines})  to (\ref{eq:general}), we can consider a first  
 term other than a sum of squares,   functionals other than
$F_j(\mu) = \mu(t_j)$ and a differential operator other than the second derivative operator.  
Examples of these variations are given in Section \ref{sec:examples}.
Section  \ref{sec:general} contains the reduction of (\ref{eq:general}) to a finite dimensional optimization problem.  
Section \ref{sec:Bayes} relates the minimizer of (\ref{eq:general}) to a Bayes estimate.  Sections \ref{sec:cubic} and
\ref{sec:general_differential} contain
results and algorithms for minimizing (\ref{eq:general}) with $P$ as in (\ref{eq:differentialP}), with Section \ref{sec:cubic}
containing the ``warm-up" of the cubic smoothing spline result for minimizing (\ref{eq:splines}) and Section 
\ref{sec:general_differential}
containing the general case.  The Appendix contains pertinent results from the theory of solutions of
differential equations.
 
The material contained here is, for the most part,
not original.  The material is
 drawn from many sources: from statistical and machine learning literature, from the theory of differential equations, from numerical analysis, and from functional analysis.    
 The purpose of this paper is to collect this diverse material in one article and to
 present it in an easily accessible form,
 to show the richness of statistical problems that involve minimizing
(\ref{eq:general}) and to explain  the
theory and provide easy to follow algorithms  for minimizing (\ref{eq:general}).  
A briefer review of  RKHS's can be found in Wahba \cite{Wahba.review}.
%%%%%%%%%%%%%%%%%%%%%%%%%%%%%%%%%%%%%%%%%%%%

\section{Examples}
\label{sec:examples}
%%%%%%%%%%%%%%%%%%%%%%%%%%%%%%%%%%%%%%%%%%%%

\subsection{Penalized likelihoods with $F_j(f) = f(t_j)$}

Most statistical applications that lead to minimizing (\ref{eq:general}) 
have the first term in  (\ref{eq:general})  equal to a negative log likelihood.  In these cases,
 the $\mu$ that minimizes (\ref{eq:general}) is called a penalized likelihood estimate of $\mu$.
Indeed, (\ref{eq:splines}) yields a penalized likelihood estimator:
the sum of squares  arises from a likelihood 
by assuming that $Y_1,\ldots,Y_n$ are independent normally distributed with
the mean of $Y_j$ equal to $\mu(t_j)$ and the variance equal to $\sigma^2$.  Then $-2 \times$ the log likelihood is simply
\[
n \log (\sigma^2) +  \frac{1}{\sigma^2}  \sum(Y_j - \mu(t_j))^2.
\]
A penalized likelihood estimate of $\mu$ with penalty $P(\mu)$  minimizes
\[
n \log (\sigma^2) +  \frac{1}{\sigma^2}  \sum(Y_j - \mu(t_j))^2
+ \lambda^* P(\mu)
\]
\[ ~~~~~~~~~~
=
\frac{1}{\sigma^2} 
\left[   {\sigma^2} 
n \log (\sigma^2) + \sum(Y_j - \mu(t_j))^2
+  {\lambda^*}{\sigma^2}  P(\mu) \right].
\]
Thus, for a given $\sigma^2$, the penalized likelihood estimate of
$\mu$ minimizes (\ref{eq:general})  with $\lambda = \lambda^* \sigma^2$.
If the $Y_j$'s are not independent but the vector $(Y_1,\ldots, Y_n)'$ has covariance matrix $\sigma^2  \Sigma$, 
then we would replace the 
sum of squares with $\sum_{j,k} [Y_j - \mu(t_j)] ~\Sigma^{-1}[j,k] ~[Y_k - \mu(t_k)]$.

Another likelihood, important in classification, is based on data  $Y_j = 1$ or $-1$  with probabilities $p(t_j)$ and $1-p(t_j)$,
respectively.  
Thus  
\[
 {\rm{~the~ log~ likelihood~}}
= 
{\sum}_j      \frac{1+Y_j}{2} ~ \log p(t_j)  +  \frac{1-Y_j}{2}~ \log [1-p(t_j)  ].
\]
To avoid placing inequality constraints on the function of interest,
 we reparameterize by setting $\mu(t)= \log[ p(t)/( 1 - p(t))]$ or equivalently $p(t) = \exp(\mu(t))/[ 1 + \exp(\mu(t))].$ 
 This reparameterization yields
 \begin{equation}
\label{eq:logit}
 {\rm{~the~ log~ likelihood~}}
=  \sum_j  \frac{1+Y_j}{2} \log \frac{ \exp(\mu(t_j))}{ 1 + \exp(\mu(t_j))}
 \ +  \frac{1-Y_j}{2} \log \frac{ 1}{ 1 + \exp(\mu(t_j))} .
\end{equation}

%%%%%%%%%%%%%%%%%%%%%%%%%%%%%%%%%%%%%%%%%%%%

\subsection{$F_j$'s based on integrals}
\label{sec:integral_F}

While $F_j(\mu)=\mu(t_j)$ is common, $F_j(\mu)$ is sometimes chosen to involve an integral of $\mu$, specifically,
 $F_j(\mu) =  \int_a^ b H(s, t_j) \mu(s) ds $,
with $ H$  known.  See  Wahba  \cite{Wahba}. 

 Li \cite{Li} and  Bacchetti  {\em{et al.}}~\cite{Bacchetti.Segal.Hessol.and.Jewell} 
 used (\ref{eq:general}) to estimate $\mu(t)$, the HIV infection rate at time $t$, based on data,  $Y_j$,
the number of new AIDS cases diagnosed in time period $(t_{j-1},t_j]$.    The expected
value of $Y_j$ depends not only on $\mu(t_j)$, but also
on $\mu(t)$ for values of $t\le t_j$.   This dependence involves
the distribution of the time of progress from HIV infection to AIDS diagnosis, which is estimated from
cohort studies.  Letting ${\cal{F}}(t|s)$
denote the probability that AIDS has developed by time $t$ given HIV infection occurred at time $s$,
\[
{\rm{E}} \left( \sum_1^j Y_i\right)  = \int_{s=0}^{t_j}  \mu(s)  {\cal{F}}(t_j |s)  ~  ds \equiv F_j(\mu).
\] 
Thus we could define the first term in (\ref{eq:general}) as a negative log likelihood assuming the $Y_j$'s are independent
 Poisson counts with E$(Y_j) = F_j(\mu) - F_{j-1}(\mu)$. 
  Or we could take the computationally simpler approach by setting the first term in (\ref{eq:general})  equal to
\[
\sum_1^n \bigg{\{}Y_j - \big[ F_j(\mu) - F_{j-1}(\mu)\big]  \bigg{\}}^2.
\]
Both  Li  \cite{Li}  and Bacchetti {\em{et al.}}~\cite{Bacchetti.Segal.Hessol.and.Jewell}  use this simpler
approach, with the former using penalty $P(\mu)=\int (\mu'')^2$ while 
the latter used  a discretized version of
 $\int (\mu'')^2$.

In a non-regression setting, Nychka  {\em{et al.}}~\cite{Nychka.Wahba.Goldfarb.and.Pugh}   estimated
 the distribution of the volumes of tumours in livers  by using  data  from 
cross-sectional slices of the livers. The authors modelled tumours as spheres and so
cross-sections were circles.  They
estimated $\mu$, the probability density of the spheres' radii, using an integral to relate
the radius of a sphere to the radius of
a random slice of the sphere.    Their estimation criterion was the minimization of an expression of the form
(\ref{eq:general}) with $F_j$ using that integral and with $P(\mu) = \int(\mu'')^2$.
 
%%%%%%%%%%%%%%%%%%%%%%%%%%%%%%%%%%%%%%%%%%%%

\subsection{Support vector machines}

Support vector machines are a classification tool, with classification rules built from data $Y_i \in \{ -1, 1\}$,
$t_i \in \Re^p$ (see, for instance, Hastie {\em{et al.}}~\cite{Hastie.Tib}).  The goal is to find a function $\mu$
for classifying:  classify $Y_i$ as $1$ if and only if $\mu(t_i)  > 0$.   We see that $Y_i$ is misclassified by this rule
if and only if $Y_i \mu(t_i) $ is positive.  Thus, it is common
to find $\mu$   to minimize $\sum_i$sign$[Y_i \mu(t_i)]$ subject to
some penalty for rough $\mu$: that is, to find $\mu$ to minimize
\[
\sum_i\text{sign}[Y_i \mu(t_i)] + \lambda P(\mu).
\]
This can be made more general by minimizing
\[
\sum_j H[ Y_i \mu(t_j)] + \lambda P(\mu)
\]
for a known non-decreasing function $H$.  
The function $H(x) = $ sign$(x)$ is not continuous at 0, which can make minimization challenging.   
To avoid this problem, Wahba \cite{Wahba.NIPS} proposed using ``softer"  $H$ functions, such
as  $H(x) = \ln[ 1 + \exp (-x)]$.  This function is not only continuous, but is differentiable and convex.   Wahba
\cite{Wahba.NIPS} showed that this $H$ corresponds to a negative log likelihood.  Specifically, she showed that
 the log likelihood in  (\ref{eq:logit}) is equal to 
  $- \sum \log  \{ 1 + \exp\left[  - Y_j \mu(t_j)\right] \}$.

%%%%%%%%%%%%%%%%%%%%%%%%%%%%%%%%%%%%%%%%%%%%

\subsection{Using different differential operators in the penalty}

Ansley, Kohn, and Wong  \cite{Ansley.Kohn.and.Wong} and
Heckman and Ramsay  \cite{Heckman.and.Ramsay} demonstrated the usefulness of appropriate
choices of $\rmL$ in the penalty $P(\mu) = \int (\rmL\mu)^2$.   
For instance, Heckman and Ramsay 
compared two estimates of a regression function for the incidence of melanoma in males. The data, described in Andrews
and Herzberg \cite{Andrews.and.Herzberg}, are from the Connecticut Tumour Registry, for the years
1936 to 1972.  The data show a roughly periodic trend superimposed on an increasing trend.
A cubic smoothing spline, the minimizer of (\ref{eq:splines}), tracks the data fairly well, but slightly dampens
the periodic component.  This dampening does not occur with Heckman and Ramsay's preferred estimate,
the estimate that 
 minimizes a modified version of (\ref{eq:splines}) but with the  penalty $\int[ \mu''(t)]^2 ~dt$  replaced
 by
the penalty $\int[  \mu^{(4)}(t) + \omega^2 \mu''(t)]^2 ~ dt$ with $\omega= 0.58.$
The differential operator L$=$D$^4 + \omega^2$D$^2$ was chosen since it  places no penalty
on  functions of the form  $\mu(t) = \alpha_1 + \alpha_2 t+
 \alpha_3 \cos \omega t+  \alpha_4 \sin \omega t$:
  such functions are exactly the functions satisfying L$\mu \equiv 0$ and
form a popular parametric model for fitting melanoma data. The value of $\omega$
was chosen by a nonlinear least squares fit to this parametric model.

The use of appropriate differential operators in the penalty has been further developed 
in the field of Dynamic Analysis.    See, for instance, Ramsay  {\em{et al.}}~\cite{Ramsay.JRSSB}.
These authors use  differential operators  equal to those used by subject area researchers, who typically
work in the finite dimensional space defined by solutions of L$\mu \equiv 0$.

%%%%%%%%%%%%%%%%%%%%%%%%%%%%%%%%%%%%%%%%%
 \section{Results for the general minimization problem}
 \label{sec:general}
 
 This section contains some background on Reproducing Kernel Hilbert Spaces and shows how to
 use Reproducing Kernel Hilbert Space structure to reduce the minimization of (\ref{eq:general}) to minimization
 over  a finite-dimensional function space (see Theorem \ref{thm:finite}).  
  Whether or not the minimizer exists  can be determined by studying the finite-dimensional version.
    While a complete review of Hilbert spaces
 is beyond the scope of this article, a few definitions may help the reader. Further background on Hilbert spaces
 can be found in any standard functional analysis textbook, such as Kolmogorov and Fomin
 \cite{Kolmogorov}  or Kreyszig \cite{Kreyszig}.
 For a condensed exposition of the necessary Hilbert space theory,
 see, for instance,  Wahba \cite{Wahba}, \cite{Wahba.review} or the appendix of Thompson and Tapia \cite{Tapia.and.Thompson}.  We will only consider Hilbert spaces over $\Re$. 
 
 Consider ${\cal{H}}$, a collection of functions from ${\cal{T}} \subseteq \Re^p$ to $\Re$.  Suppose that
 ${\cal{H}}$ is a vector space over $\Re$ with inner product $< \cdot,\cdot>$.   The inner product  induces a norm
 on ${\cal{H}}$, namely $||f|| = [ <f,f>]^{1/2}$.  The existence of a norm allows us to define limits of sequences in ${\cal{H}}$ and continuity
 of functions with arguments in ${\cal{H}}$.  The vector space 
 ${\cal{H}}$ is a {\em{Hilbert space}} if it is complete with respect to this norm, that is, if any Cauchy sequence
 in ${\cal{H}}$ converges to an element of ${\cal{H}}$.  
 
  A {\em{linear functional}} $F$ is a function
 from a Hilbert space ${\cal{H}}$ to the reals satisfying $F(\alpha f + \beta g) = \alpha F(f) + \beta F(g)$
 for all $\alpha, \beta \in \Re$ and all $f, g \in {\cal{H}}$.
  The Riesz Representation Theorem states that a linear functional $F$ is continuous on 
  ${\cal{H}}$ if and only if there exists $\eta \in {\cal{H}}$ such that $<\eta,f> = F(f)$ for all
$f \in {\cal{H}}$.  The function $\eta$ is called the representer of $F$.   

The Hilbert space
${\cal{H}}$  is a {\em{Reproducing Kernel Hilbert
Space}}  if and only if,  for all $t \in {\cal{T}}$,
the linear functional $F_t(f ) \equiv f (t)$ is continuous, that is, if and only if,
for all $t \in {\cal{T}}$, there exists $R_t \in {\cal{H}}$ such that $< R_t,f> = f(t)$ for all $f \in {\cal{H}}$.  
Noting that the collection of $R_t$'s, $t \in {\cal{T}}$, defines a bivariate function $R$, namely  $R(s,t) \equiv R_t(s)$, we see
that  
${\cal{H}}$  is a  Reproducing Kernel Hilbert
Space  if and only if there exists a
bivariate function $R$ defined on ${\cal{T}} \times {\cal{T}}$ such that
$<R(\cdot , t) , f >= f(t)$ for all $ f \in  {\cal{ H }}$  and all $t \in {\cal{T}}$.  The function
$R$ is called the reproducing kernel of ${\cal{H}}$.

One can show that the reproducing kernel is  symmetric in its arguments, as follows.  To aid the proof, use the notation
that $R_t(s)=R(s,t)$ and $R_s(t)=R(t,s)$.  By the reproducing properties of $R_t$ and $R_s$, 
$< R_t, R_s> =R_s(t)$ and $< R_s, R_t> = R_t(s)$.  But the inner product is symmetric, that is
$< R_t, R_s> = < R_s, R_t> $.  So $R_s(t)=R_t(s)$.
 
 To give  the form of the finite-dimensional minimizer of (\ref{eq:general}), we assume that the following conditions hold.  
   \begin{itemize}
 \item[(C.1)]
 \label{condition:direct_sum}
 There are ${\cal{H}}_0$ and ${\cal{H}}_1$, linear subspaces of ${\cal{H}}$,
 with ${\cal{H}}_1$ the orthogonal complement of ${\cal{H}}_0$.
   \item[(C.2)]
 \label {condition:H0}
  ${\cal{H}}_0 $ is of dimension $m < \infty$, with basis $u_1,\ldots,u_m$.
  If $m=0$,  take ${\cal{H}}_0$ equal to the empty set and 
  ${\cal{H}}_1 = {\cal{H}}$.
   \item[(C.3)] 
  \label{condition:rk} 
  There exists $R_0 \in {\cal{H}}_0$ and $ R_1 \in {\cal{H}}_1$ such that $R_i$ is a reproducing kernel
  for ${\cal{H}}_i$, in the sense that 
 $< R_i(\cdot,t), \mu> = \mu(t)$ for all $\mu \in {\cal{H}}_i$, $i=0,1$.
  \end{itemize}
  Since ${\cal{H}}_0$ is finite dimensional, it is closed.  The orthogonal complement of a subspace is always closed.
  Thus Condition (C.1) implies that any   $\mu \in {\cal{H}}$ can be written as
  $\mu = \mu_0 + \mu_1$ for some $\mu_0 \in {\cal{H}}_0$ and $\mu_1 \in {\cal{H}}_1$ and that
  $< \mu_0, \mu_1> = 0$.   This is often written as ${\cal{H}} = {\cal{H}}_0 \oplus {\cal{H}}_1$.
Note that Conditions (C.1), (C.2)  and (C.3) imply that $R \equiv R_0 + R_1$ is
a reproducing kernel for ${\cal{H}}$.

We require one more condition, relating the penalty $P$ to the partition of ${\cal{H}}$.
\begin{itemize}
  \item[(C.4)]
  \label{condition:P}  Write $\mu = \mu_0 + \mu_1$, with $\mu_i \in {\cal{H}}_i$.  Then 
 $P(\mu) = < \mu_1, \mu_1>$.
 \end{itemize}

\vskip 10pt
\noindent
\begin{theorem}
\label{thm:finite}
 Suppose that conditions (C.1) through (C.4) hold
and that
$F_1, \ldots, F_n$ are  continuous linear functionals on ${\cal{H}}$.
Let $\eta_{j1} (t) = F_j(R_1(\cdot,t))$, that is, $F_j$ applied to 
the function $R_1$ considered as a function of $s$, with $t$ fixed.
Then to minimize   (\ref{eq:general}), it is necessary and sufficient to find 
\[
{\mu}(t) \equiv  \mu_0(t) + \mu_{11}(t) \equiv \sum_1^m \alpha_j u_j (t)  + \sum_1^n  \beta_j \eta_{j1}(t)
\]
where the $\alpha_j$'s and $\beta_j$'s  minimize
\[
G(t_1,\ldots,t_n,Y_1, \ldots, Y_n, F_1(\mu_0 + \mu_{11}),\ldots, F_n(\mu_0 + \mu_{11}))  + \lambda 
\bbeta' {K} \bbeta.
\] 
Here $\bbeta = (\beta_1,\ldots,\beta_n)'$ and the matrix  ${K}$ is symmetric and non-negative definite, with
${K}[j,k]= F_j(\eta_{k1})$.   If $F_j(f) = f(t_j)$ and $F_k(f) = f(t_k)$, then 
$\eta_{1j}(t) =  R_1(t_j,t) $, $\eta_{1k}(t) =  R_1(t_k,t) $ and 
 ${K}[j,k]= R_1(t_j,t_k) $.
\end{theorem}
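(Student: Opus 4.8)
The plan is to recognize the functions $\eta_{j1}$ as the Riesz representers of the $F_j$ restricted to ${\cal{H}}_1$, and then to run the standard orthogonal-projection (``representer theorem'') argument. First I would establish the identity
\[
F_j(g) = <\eta_{j1},g> \qquad \text{for all } g \in {\cal{H}}_1 .
\]
Since $F_j$ is continuous on ${\cal{H}}$ it is continuous on the subspace ${\cal{H}}_1$, so the Riesz Representation Theorem supplies a representer $\zeta_j \in {\cal{H}}_1$ with $F_j(g)=<\zeta_j,g>$ for every $g\in{\cal{H}}_1$. Evaluating $\zeta_j$ via the reproducing property of $R_1$ from Condition (C.3) gives $\zeta_j(t)=<R_1(\cdot,t),\zeta_j> = F_j(R_1(\cdot,t)) = \eta_{j1}(t)$, so $\zeta_j = \eta_{j1}$; in particular each $\eta_{j1}$ lies in ${\cal{H}}_1$ and is exactly the representer claimed.

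Next comes the projection step. I would write an arbitrary $\mu\in{\cal{H}}$ as $\mu = \mu_0 + \mu_{11} + \mu_{12}$, where $\mu_0\in{\cal{H}}_0$, where $\mu_{11}$ is the orthogonal projection of $\mu-\mu_0\in{\cal{H}}_1$ onto $\text{span}\{\eta_{11},\ldots,\eta_{n1}\}$, and where $\mu_{12}\in{\cal{H}}_1$ is orthogonal to every $\eta_{j1}$. By the identity above, $F_j(\mu_{12}) = <\eta_{j1},\mu_{12}> = 0$ for each $j$, so $F_j(\mu) = F_j(\mu_0+\mu_{11})$ and the term $G$ in (\ref{eq:general}) is unchanged when $\mu_{12}$ is dropped. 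By Condition (C.4) and orthogonality, $P(\mu) = ||\mu-\mu_0||^2 = ||\mu_{11}||^2 + ||\mu_{12}||^2 \ge ||\mu_{11}||^2$, with equality exactly when $\mu_{12}=0$. Hence replacing $\mu$ by $\mu_0+\mu_{11}$ never increases (\ref{eq:general}), which gives sufficiency of the stated form; when $\lambda>0$ any minimizer must have $\mu_{12}=0$, giving necessity.

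Finally I would expand in coordinates. Writing $\mu_0 = \sum_1^m \alpha_j u_j$ in the basis of Condition (C.2) and $\mu_{11} = \sum_1^n \beta_j \eta_{j1}$, the penalty becomes $||\mu_{11}||^2 = \sum_{j,k}\beta_j\beta_k<\eta_{j1},\eta_{k1}>$, and applying the identity with $g=\eta_{k1}$ yields $<\eta_{j1},\eta_{k1}> = F_j(\eta_{k1}) = {K}[j,k]$, so $P(\mu)=\bbeta'{K}\bbeta$. Symmetry of ${K}$ follows from symmetry of the inner product, and non-negative definiteness from $\bbeta'{K}\bbeta = ||\sum_j\beta_j\eta_{j1}||^2\ge 0$. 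The special case $F_j(f)=f(t_j)$ is then immediate: $\eta_{j1}(t)=F_j(R_1(\cdot,t))=R_1(t_j,t)$, and ${K}[j,k]=\eta_{k1}(t_j)=R_1(t_k,t_j)=R_1(t_j,t_k)$ by symmetry of $R_1$. I expect the only genuine subtlety to be the first step---checking that $\eta_{j1}$ represents $F_j$ on ${\cal{H}}_1$ rather than on all of ${\cal{H}}$---after which everything reduces to bookkeeping with orthogonality.
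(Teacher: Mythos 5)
Your proof is correct and takes essentially the same route as the paper's: identify $\eta_{j1}$ as the Riesz representer of $F_j$ on ${\cal{H}}_1$ via the reproducing property of $R_1$, split ${\cal{H}}$ into ${\cal{H}}_0$, the span of the $\eta_{j1}$'s, and its orthogonal complement within ${\cal{H}}_1$, and observe that the component $\mu_{12}$ leaves the $F_j$'s unchanged while only inflating the penalty. Your two small departures are improvements in economy and care rather than a different argument---you use linearity of $F_j$ directly where the paper introduces the full-space representer $\eta_j = \eta_{j0} + \eta_{j1}$, and you correctly flag that necessity of $\mu_{12} = 0$ requires $\lambda > 0$, a point the paper glosses over with ``clearly.''
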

\begin{proof} 
By the Riesz Representation Theorem, there exists  a representer
$\eta_j \in {\cal{H}}$ such that $<\eta_j, \mu> = F_j(\mu)$
for all $\mu \in {\cal{H}}$.  Applying the Riesz Representation Theorem to the subspaces ${\cal{H}}_0$ and
${\cal{H}}_1$, which can be considered as Hilbert spaces in their own rights,
there exists $\eta_{j0} \in {\cal{H}}_0$ and $\eta_{j1} ^*\in {\cal{H}}_1$,
 representers of $F_j$ in the sense that
 $< \eta_{j0}, \mu> = F_j(\mu)$ for all $\mu \in {\cal{H}}_0$ and
 $< \eta_{j1}^*, \mu> = F_j(\mu)$ for all $\mu \in {\cal{H}}_1$.  One easily shows that this $\eta_{j1}^*$ is 
 equal to $\eta_{j1}$, as defined in the statement of the Theorem:
by the definition of the representer of $F_j$,
  $\eta_{j1}^*$ must satisfy $F_j(R_1(\cdot , t)) =
 <\eta_{j1}^*, R_1(\cdot , t)>$.  But, by the reproducing quality of $R_1$,  $ <\eta_{j1}^*, R_1(\cdot , t)>= \eta_{j1}^*(t)$.  
So $\eta_{j1}^* = \eta_{j1}$. One also easily shows that
 \[
 \eta_j = \eta_{j0} + \eta_{j1}.
 \]
  
 We  use the $\eta_{j1}$'s to partition ${\cal{H}}_1$ as follows.  Let
${\cal{H}}_{11}$ be the finite dimensional subspace of ${\cal{H}}_1$ spanned by $\eta_{j1}, j = 1,\cdots, n$, and let
${\cal{H}}_{12}$ be the orthogonal complement of ${\cal{H}}_{11}$ in ${\cal{H}}_1$.
Then
$
{\cal{H}}  =  {\cal{H}}_0  \oplus {\cal{H}}_{11} \oplus {\cal{H}}_{12}
$
and  so any $\mu \in {\cal{H}}$ can
be written as
\[
\mu = \mu_0 + \mu_{11} + \mu_{12} \quad  {\rm{with}}~~ \mu_0 \in {\cal{H}}_0 ~~ {\rm{and}} ~~
 \mu_{1k} \in {\cal{H}}_{1k}, k = 1, 2.
\]

We now show that  any minimizer of (\ref{eq:general}) must have $\mu_{12}\equiv 0$.
Let $\mu$ be any element of ${\cal{H}}$. 
Since $\eta_j$ is
the representer of $F_j$ and $\mu_{12}$ is orthogonal to $\eta_j$,
\[
F_j (\mu) = <\eta_j, \mu> = <\eta_j , \mu_0 + \mu_{11} + \mu_{12}> = <\eta_j, \mu_0 + \mu_{11}> = F_j(\mu_0 + \mu_{11}).
\]
Therefore, $\mu_{12}$ is irrelevant in computing the first term in (\ref{eq:general}).
To study the second term in (\ref{eq:general}), by (C.4) and the orthogonality of $\mu_{11}$ and
$\mu_{12}$,
\[
P(\mu) = <\mu_1, \mu_1> = <\mu_{11}, \mu_{11}> + <\mu_{12}, \mu_{12}>.
\]

Therefore, we want to find $\mu_0 \in {\cal{H}}_0$, $\mu_{11}\in {\cal{H}}_{11}$ and $\mu_{12}\in {\cal{H}}_{12}$ to minimize
\[
G(t_1,\ldots,t_n,Y_1, \ldots,Y_n, F_1(\mu_0 + \mu_{11}),\ldots, F_n(\mu_0 + \mu_{11}))  + \lambda 
\left[<\mu_{11}, \mu_{11}>+<\mu_{12}, \mu_{12}>\right].
\]
Clearly, 
we should take $\mu_{12}$ to be the zero function and so any minimizer of (\ref{eq:general}) is of the form
\begin{eqnarray}
{\mu} (t) &=& \mu_0(t) + \mu_{11}(t)
\nonumber \\
&=&  \sum_1^m \alpha_j u_j (t)  + \sum_1^n  \beta_j \eta_{j1}(t).
\nonumber \end{eqnarray}

Now consider rewriting $P(\mu)$ as $\bbeta' {K} \bbeta$:
$P(\mu) =   <\mu_{11},\mu_{11}> = \sum_{j,k} \beta_j \beta_k < \eta_{j1},\eta_{k1}> \equiv
\bbeta 'K^* \bbeta$ for $K^*$ symmetric and non-negative definite.
To show that $K^*[j,k]= F_j(\eta_{k1})$, use  the fact that $\eta_{j1}$ is the representer of 
$F_j$ in  ${\cal{H}}_1$, that is, that $< \eta_{j1},f> = F_j(f)$ for all $f \in {\cal{H}}_1$.  Applying
 this to $f = \eta_{k1}$ yields the desired result, that $< \eta_{j1},\eta_{k1} >= F_j(\eta_{k1})$.
 
Consider the case that $F_j(f)=f(t_j)$ and $F_k(f) = f(t_k)$.   Then $\eta_{1j}(t) = F_j(R_1(\cdot,t) )= R_1(t_j,t)$, 
$\eta_{1k}(t) =  R_1(t_k,t)$, and $K[j,k] = F_j(\eta_{k1}) = R_1(t_k,t_j) = R_1(t_j,t_k)$ by symmetry of $R_1$.  
\end{proof}

\vskip 10pt

The proof of the following Corollary is immediate, by taking $m=0$ in (C.2).

\begin{corollary}
Suppose that ${\cal{H}}$ is an RKHS with inner product $<\cdot,\cdot>$ and reproducing kernel $R$.  
In (\ref{eq:general}), suppose that $P(\mu) = <\mu,\mu>$ and assume that the $F_j$'s are continuous
linear functionals.  Then the minimizer of
(\ref{eq:general}) is of the form
\[
\mu(t) = \sum_1^n  \beta_j F_j (R_1(\cdot,t)).
\]
\end{corollary}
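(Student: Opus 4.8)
The plan is to recognize the Corollary as the degenerate case $m=0$ of Theorem~\ref{thm:finite} and to verify that all four hypotheses of that theorem hold automatically under the Corollary's assumptions. First I would set ${\cal{H}}_0 = \{0\}$, the zero subspace (of dimension $m=0$), and ${\cal{H}}_1 = {\cal{H}}$, exactly as prescribed by the $m=0$ convention in Condition (C.2). Then (C.1) holds trivially, since the orthogonal complement of $\{0\}$ in ${\cal{H}}$ is all of ${\cal{H}}$. Condition (C.3) is immediate as well: the reproducing kernel of ${\cal{H}}_1 = {\cal{H}}$ is the kernel $R$ given in the Corollary's hypothesis, so we may take $R_1 = R$, and there is no nontrivial $R_0$ to supply.

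Next I would check (C.4). With ${\cal{H}}_0 = \{0\}$, the decomposition $\mu = \mu_0 + \mu_1$ forces $\mu_0 = 0$ and $\mu_1 = \mu$, whence $P(\mu) = <\mu_1,\mu_1> = <\mu,\mu>$, matching the penalty assumed in the Corollary. With all four conditions in force, and the $F_j$ continuous linear functionals by hypothesis, Theorem~\ref{thm:finite} applies directly.

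Finally I would read off the conclusion. The minimizer supplied by the theorem has the form $\mu(t) = \sum_1^m \alpha_j u_j(t) + \sum_1^n \beta_j \eta_{j1}(t)$; since $m=0$ the first sum is empty. Recalling the definition $\eta_{j1}(t) = F_j(R_1(\cdot,t))$ together with the identification $R_1 = R$ from the first step, the minimizer collapses to $\mu(t) = \sum_1^n \beta_j F_j(R(\cdot,t))$, which is precisely the claimed form. The only point needing any attention is the bookkeeping of the $m=0$ convention --- confirming that the trivial ${\cal{H}}_0$ collapses the direct-sum decomposition and empties the $\alpha_j$ sum --- but this is routine rather than a genuine obstacle.
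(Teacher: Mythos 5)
Your proposal is correct and follows exactly the paper's route: the paper proves the Corollary by ``taking $m=0$ in (C.2)'' and invoking Theorem~\ref{thm:finite}, which is precisely your argument, only with the routine verification of (C.1)--(C.4) and the identification $R_1 = R$ spelled out explicitly. Your more careful handling of the trivial subspace (the zero subspace rather than the paper's ``empty set'') is a harmless, indeed slightly cleaner, piece of bookkeeping.
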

%%%%%%%%%%%%%%%%%%%%%%%%%%%%%%%%%%%%%%%%%

\section{A Bayesian connection}
\label{sec:Bayes} 

Sometimes, the minimizer of (\ref{eq:general}) is related to a Bayes estimate of $\mu$. 
In the Bayes formulation,
 $Y_j = \mu(t_j) + \epsilon_j$ where the $\epsilon_j$'s are independent normal random variables
with  zero means and variances equal to  $\sigma^2$.   
The function   $\mu$ is  the realization of a stochastic process and is independent of the $\epsilon_j$'s.

The connection between $\hat{\mu}$, 
  the minimizer of  $\sum [Y_j - \mu(t_j)]^2  + \lambda \int (L\mu)^2$,  and   a Bayes estimate
 of $\mu$
was first given by  Kimeldorf  and Wahba \cite{Kimeldorf.Wahba}
for the case that L$\mu = \mu^{(m)}$.  The result was generalized
to  L's as in (\ref{eq:differentialP})  by Kohn and Ansley \cite{Kohn.Ansley}. 
The function   $\mu$  is defined on $\Re$ and is 
generated by the stochastic differential equation
L$\mu(t) ~dt = \sigma \sqrt{\lambda} ~dW(t)$ where $W$ is a mean zero Wiener process on $[a,b]$ with 
var$(W(t))=t$.  Assume that $\mu$
satisfies the initial conditions:
$\mu(a), \mu'(a),\ldots,\mu^{(m-1)}(a) $ are independent normal random variables with 
zero means   and variances equal to $k$.
Let $\hat{\mu}_{k}(t)$ be the posterior mean of $\mu(t)$ given $Y_1,\ldots,Y_n$. Then
Kimeldorf and Wahba
\cite{Kimeldorf.Wahba}
 and Kohn and Ansley \cite{Kohn.Ansley} show that 
$\hat{\mu}(t) = \lim_{k \to \infty} \hat{\mu}_{k}(t)$.

Another  Bayes connection arises in Gaussian process regression, a tool of machine learning (see, for instance,
Rasmussen and Williams \cite{Rasmussen}).  
Consider  $\mu$ defined on 
 $ A \subseteq \Re^p$, with $\mu$ 
 the realization of a mean zero stochastic process with covariance function $S$.  Let $\hat{\mu}_B$
be the pointwise Bayes estimate of $\mu$:
\[
\hat{\mu}_B(t) =  {\rm{E}}( \mu(t) | Y_1,\ldots, Y_n) = 
S(t,{\bf{t}})~\left[\sigma^ 2 {\rm{I} } + S({\bf{t}}, {\bf{t}}) \right]^{-1} {\bf{Y}}
\]
where
$S(t, \bf{t})'$ is  an $n$-vector with $j$th entry $S(t, t_j)$,
$S(\bf{t}, \bf{t})$ is the $n \times n$ matrix with $jk$th entry
$S(t_j, t_k)$ and  ${\bf{Y}} = (Y_1,\ldots,Y_n)'$.
Then, as shown below, for an
appropriately defined Reproducing Kernel Hilbert Space ${\cal{H}}_S$ with reproducing kernel $S$, the Bayes
estimate of $\mu$ is  equal to
\begin{equation}
\label{eq:Gaussian.Bayes}
\arg \min_{\mu \in {\cal{H}}_S} ~~ \sum_{j=1}^n [Y_j -  \mu(t_j)]^2 + {\sigma}^2  <\mu,\mu>.
\end{equation}
The existence of the space  ${\cal{H}_S}$ with reproducing kernel $S$ is given by the
Moore-Aronszajn Theorem (Aronszajn \cite{Aronszain}). 
The space is defined by constructing finite-dimensional spaces:  fix $J >0$ and $t_1, \ldots, t_J \in A$
and consider the finite dimensional linear space of functions,
${\cal{H}}_{\{t_1,\ldots,t_J\}}$,  consisting of all linear combinations of  $ S(t_1,\cdot), S(t_2,\cdot), \ldots, S(t_J,\cdot) $.
Let ${\cal{H}}^*$ be the union of these ${\cal{H}}_{\{t_1,\ldots,t_J\}}$'s over all $J$ and all values of $t_1, \ldots, t_J$.
Let $<,>$ be the inner product on ${\cal{H}}^*$ generated by  $< S(t_j,\cdot), S(t_k,\cdot)> = S(t_j,t_k)$, that is,
$< \sum_j a_j S(t_j,\cdot), \sum_k b_k S(x_k, \cdot)> = \sum_{j,k} a_j b_k   S(t_j, x_k)$.
Let ${\cal{H}}_S$ be the completion of ${\cal{H}}^*$ under the norm associated with this inner product.   
Then ${\cal{H}}_S$ is a Reproducing Kernel Hilbert Space with reproducing kernel $S$.  
So, by Theorem \ref{thm:finite},
 the solution to (\ref{eq:Gaussian.Bayes}) is of the form $\mu(t) = \sum_{l=1}^n \beta_l S(t_l,t)
 = S(t, \bf{t}) \bbeta $, with the $\beta_j$'s 
chosen to minimize
\[
\sum_{j=1}^n \left[Y_j -  \sum_{l=1}^n \beta_l S(t_l,t_j)\right]^2 + {\sigma}^2  
\sum_{l,k=1}^n  \beta_l \beta_k S(t_l,t_k)
= || {\bf{Y}} - S(\bf{t}, \bf{t}) \bbeta ||^2 +  {\sigma}^2  \bbeta' S(\bf{t}, \bf{t}) \bbeta
\]
where $\bbeta= (\beta_1,\ldots, \beta_n)'$.
The minimizing 
$
{\boldsymbol{\hat{\beta}}}$ is $ \left [  {\sigma^2}  {\rm{I}} + S(\bf{t}, \bf{t}) \right]^{-1}   {\bf{Y}},
$
and so  the solution to (\ref{eq:Gaussian.Bayes}) is equal to $\hat{\mu}_B$.

%%%%%%%%%%%%%%%%%%%%%%%%%%%%%%%%%%%%%%%%%%%%%%%%%%%%

\section{Results for the cubic smoothing spline}
\label{sec:cubic}

Here, we minimize (\ref{eq:splines}) using Theorem \ref{thm:finite}.  The expressions
for the reproducing kernels
 $R_0$ and $R_1$ are provided.  The next section contains an algorithm for computing $R_0$ and
 $R_1$ for general L.

The first step to minimize (\ref{eq:splines}) over $\mu \in {\cal{H}}^2[a, b]$
 is to define the inner product on ${\cal{H}}^2[a, b]$:
\[
<f, g> = f (a)g(a) + f'(a)g'(a) + \int_a^b f''(t)~g''(t)~dt.
\]
Verifying that this is an inner product is straightforward, including showing that $<f,f>=0$ if and only if
$f \equiv 0$.  The proof that ${\cal{H}}^2[a, b]$ is complete under this inner product 
uses the completeness of ${\cal{L}}^2[a,b]$.

For (C.1) and (C.2) of
  Section \ref{sec:general}, we partition ${\cal{H}}^2[a, b]$ into ${\cal{H}}_0$ and ${\cal{H}}_1$:
\[
{\cal{H}}_0 = \{ f: f''(t) \equiv 0 \} = {\rm{~the~span~of~}} \{ 1, t\}
\]
and
\[
{\cal{H}}_1 = \{ f \in {\cal{H}}^2[a,b]: f(a)=f'(a) = 0 \}.
\]
${\cal{H}}_1$
is  the orthogonal complement of ${\cal{H}}_0$ and so
${\cal{H}}^2[a,b] = {\cal{H}}_0 \oplus  {\cal{H}}_1$.  (This is shown in Theorem \ref{thm:Hilbert} for ${\cal{H}}^m[a,b]$.)

For (C.3) let 
\[
R_0(s,t) = 1 + (s  -   a)(t  -   a)
\]   
and
\[
R_1(s,t) = st \left( \min\{s, t\} -  a\right)
 + \frac{ s + t}{2} \left[   (\min\{s, t\})^2 -  a^2\right]
 + \frac{1}{3}  \left[  (\min\{s, t\})^3 -  a^3\right].
 \]
Then direct calculations verify that $R_0$ and $R_1$ are the reproducing kernels of, respectively, ${\cal{H}}_0$
and ${\cal{H}}_1$, that is, that $R_i \in {\cal{H}}_i$ and that 
$<R_i(\cdot,t),f > = f (t) $  for all $ f \in {\cal{H}}_i$, $i=0,1$. 

To verify that condition (C.4) is
satisfied, write $\mu = \mu_0 + \mu_1$, with $\mu_i \in {\cal{H}}_i$, $i=0,1$. Then
$P(\mu) = \int (\mu'')^2 = \int (\mu_1'')^2 = < \mu_1,\mu_1>$.

We can show that $F_j(\mu) = \mu(t_j)$ is a continuous linear functional, either by using the definition of the inner product
to verify continuity of $F_j$ or by noting that $R=R_0 + R_1$ is the reproducing kernel of $H^2[a,b]$. 
Thus, by Theorem \ref{thm:finite}, to minimize (\ref{eq:splines}) we can restrict attention to  
\[
\mu(t) = \alpha_0 + \alpha_1t +
\sum_1^n \beta_j   R_1(t_j, t)
\]
and find  $\alpha_0$, $\alpha_1$ and {\mbox{\boldmath{$\beta$}}}$\equiv (\beta_1,\ldots, \beta_n)'$ to minimize
\[
\sum_j [ Y_j - \alpha_0 - \alpha_1t_j - \sum_k \beta_k   R_1 (t_j,t_k) ]^2
+  \beta' {K} \beta
\]
where
${K}[j,k] = R_1(t_j,t_k)
$.  In matrix/vector form, we seek $\bbeta$ and ${\balpha}= (\alpha_0, \alpha_1)'$
to minimize
\begin{equation}
||{\bf{Y}}  -  T {\balpha}-  {{ K}} {\bbeta}||^2 +  \lambda \bbeta' K \bbeta
  \label{eq:splines.matrix}
  \end{equation}
with  
${\bf{Y}} = (Y_1,\cdots, Y_n)', T_{i1} = 1$ and $T_{i2} = t_i$,
$i = 1, \cdots, n$.  One can minimize (\ref{eq:splines.matrix}) directly, using matrix
calculus.

Unfortunately, solving the matrix equations resulting from the differentiation of (\ref{eq:splines.matrix}) involves inverting matrices which are  ill-conditioned and large. Thus,
the calculations are subject to round-off errors that seriously effect the accuracy of the solution. In addition, the matrices to be inverted are not sparse,
so that $O(n^3)$ operations are required. This can be a formidable task for, say,
$n = 1000$. The problem is due to the fact that the bases functions 1, $t$, and
$R_1(t_j, \cdot)$ are almost dependent with supports equal to the entire interval
$[a, b]$.
There are two ways around this problem. One way is to replace this inconvenient basis with a more stable one, one in which the elements have close
to non-overlapping support. The most popular stable basis for this problem is that
made up of cubic  B-splines (see, e.g., Eubank \cite{Eubank}). The $i$th B-spline basis function has support $[t_i; t_{i+2}]$ and thus the matrices involved in the minimization
of (\ref{eq:splines}) are banded, well-conditioned, and fast to invert. Another approach
is that of Reinsch (\cite{Reinsch1967},  \cite{Reinsch1970}). The Reinsch algorithm yields a minimizer
in O($n$) calculations. The approach for the Reinsch algorithm is based on a
paper of Anselone and Laurent \cite{Anselone.and.Laurent}. 
Section \ref{sec:minsquares2} gives this technique for minimization of expressions like
(\ref{eq:splines.matrix}).

%%%%%%%%%%%%%%%%%%%%%%%%%%%%%%%%%%%%%%%%%

\section{Results for penalties with differential operators}
\label{sec:general_differential}

Now consider the problem of minimizing (\ref{eq:general}) with penalty $P$ based on a differential operator  L,
as in (\ref{eq:differentialP}), that is, of minimizing
\begin{equation}
\label{eq:minimum_differentialP}
G( t_1,\ldots, t_n, Y_1,\ldots, Y_n, F_1(\mu),\ldots, F_n(\mu)) + \lambda \int ({\rm{L}}\mu)^2
\end{equation}
over $\mu \in {\cal{H}}^m[a,b]$. 
We can apply Theorem \ref{thm:finite}  using the Reproducing Kernel Hilbert Space
structure for ${\cal{H}}^m[a,b]$ defined in Section \ref{sec:Pen_theory} below.
We can then explicitly calculate 
the form of $\mu$ provided we can calculate reproducing kernels.  Theorem \ref{thm:Hilbert}  states a method for 
explicitly calculating reproducing kernels.
Section \ref{sec:algorithm} summarizes the algorithm for calculating reproducing kernels and the form 
of the minimizing $\mu$, and contains three examples of calculations.
Theorem \ref{thm:Hilbert}  and the calculations of Section \ref{sec:algorithm} require results from the theory of differential equations. 
 The Appendix contains these results,  including a constructive proof of the existence of $G( \cdot,\cdot )$, the Green's function
 associated with the differential operator L.  Section \ref{sec:minsquares2}  contains a fast algorithm for minimizing
 (\ref{eq:minimum_differentialP}) when $G$ is a sum of squares and $F_j(f) = f(t_j)$.

\subsection{The form of the minimizer of (\ref{eq:minimum_differentialP})}
\label{sec:Pen_theory}

Giving the form of the minimizing $\mu$ uses the result of
Theorem \ref{thm:A:Lnullbasis} in the Appendix,
 that there exist linearly independent $u_1,\cdots, u_m \in {\cal{H}}^m[a,b]$ with $m$
derivatives and that these functions   form a basis for the set
of all $\mu$ with L$\mu(t) = 0$ almost everywhere $t$.   Furthermore ${W}(t)$, the
Wronskian matrix associated with $u_1,\cdots, u_m$, is invertible for all $t \in
[a, b]$. The Wronskian matrix is defined as 
\[
[{W}(t)]_{ij} = u_i^{(j -  1)} (t), i, j =
1, \cdots, m.
\]

The following is an inner product  under which $ {\cal{H}}^m[a,b]$ is a Reproducing
Kernel Hilbert Space:
\begin{equation}
\label{eq:inner}
<f, g> = \sum_{j=0}^{m-1}f^{(j)} (a)g^{(j)}(a)
+ \int_a^b(\rmL f)(t)~(\rmL g)(t)~dt.
 \end{equation}
To show that this is, indeed, an inner product is straightforward, except to show that $<f,f> = 0$ implies that $f\equiv 0$. But this follows immediately
from Theorem \ref{thm:A:inner} in the Appendix.

\vskip 10pt

\noindent
\begin{theorem}
\label{thm:Hilbert}
 Let  L be as in  (\ref{eq:differentialP}), let  $\{u_1, \cdots, u_m\}$
 be a basis for the set of $\mu$ with L$\mu \equiv 0$ and let
${W}(t)$ be the associated Wronskian matrix. Then, under the inner product (\ref{eq:inner}),
${\cal{H}}^m[a,b]$ is a Reproducing Kernel Hilbert Space with reproducing kernel $R(s, t) =
R_0(s, t) + R_1(s, t)$ where
\[
R_0(s, t) = \sum_{i,j=1}^m C_{ij} u_i(s)u_j(t)
 \]
 with
 \[
  C_{ij} = \left[({W}(a){W}'(a)) ^{-1}\right]_{ij},
\]
\[ R_1(s,t) = \int_{u=a}^b G(s,u)~ G(t,u)~du
\]
and $G(\cdot , \cdot ) $ is the Green's function associated with L, as given in equations (\ref{eq:Greensint}),
(\ref{eq:boundary}) and (\ref{eq:Greens}) in the Appendix.
Furthermore, ${\cal{H}}^m[a,b]$ can be partitioned into the direct sum of the two subspaces
\begin{eqnarray}
{\cal{H}}_0 & =& ~\text{the set of all}~ f \in {\cal{H}}^m[a,b]
~\text{with}~ \rmL f (t) = 0 ~\text{almost everywhere}~ t
\nonumber \\
&=&
~\text{the span of } u_1,\ldots,u_m
\nonumber \end{eqnarray}
and
\[
{\cal{H}}_1 =  ~\text{the set of all}~ f \in {\cal{H}}^m[a,b]
~\text{with}~ f^{(j)}(a) = 0, j = 0, \cdots  m-1.
\]
${\cal{H}}_1$  is the orthogonal complement of  ${\cal{H}}_0$.
${\cal{H}}_0$ has reproducing kernel $R_0$ and ${\cal{H}}_1$ has reproducing kernel $R_1$.
\end{theorem}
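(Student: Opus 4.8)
The plan is to verify the orthogonal decomposition ${\cal H}^m[a,b] = {\cal H}_0 \oplus {\cal H}_1$ first, then to show separately that $R_0$ reproduces on ${\cal H}_0$ and that $R_1$ reproduces on ${\cal H}_1$, and finally to invoke the observation following (C.3) that the sum of the two subspace kernels reproduces on the whole space. Throughout I would assume Theorem \ref{thm:A:inner} (so that (\ref{eq:inner}) is a genuine inner product), Theorem \ref{thm:A:Lnullbasis} (so that $u_1,\dots,u_m$ is a basis for the null space of $\rmL$ and $W(a)$ is invertible), and the Green's-function properties recorded in (\ref{eq:Greensint}), (\ref{eq:boundary}) and (\ref{eq:Greens}).

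For the decomposition, orthogonality of ${\cal H}_0$ and ${\cal H}_1$ under (\ref{eq:inner}) is immediate: for $f \in {\cal H}_0$ the integral term vanishes since $\rmL f \equiv 0$, while for $g \in {\cal H}_1$ the boundary sum vanishes since $g^{(j)}(a)=0$, so $<f,g>=0$. For the direct sum, given $f \in {\cal H}^m[a,b]$ I would let $f_0$ be the unique element of ${\cal H}_0$ with $f_0^{(j)}(a)=f^{(j)}(a)$, $j=0,\dots,m-1$ — this exists and is unique because invertibility of $W(a)$ makes $\balpha \mapsto (f^{(j)}(a))_j$ a bijection on ${\cal H}_0$ — and set $f_1=f-f_0 \in {\cal H}_1$. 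Uniqueness of the splitting is the statement ${\cal H}_0 \cap {\cal H}_1=\{0\}$, which is uniqueness for the initial value problem $\rmL f=0$, $f^{(j)}(a)=0$. Since ${\cal H}_0$ is finite-dimensional, hence closed, with orthogonal complement ${\cal H}_1$, this is an orthogonal direct sum, and (C.1)--(C.2) hold.

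Next, for $R_0$: the form $R_0(s,t)=\sum_{i,j} C_{ij}u_i(s)u_j(t)$ shows $R_0(\cdot,t)\in{\cal H}_0$. Given $f=\sum_k a_k u_k \in {\cal H}_0$, the inner product reduces to the boundary sum, and I would compute
\[
<R_0(\cdot,t),f> = \sum_{i,j,k}\Big(\sum_{\ell=0}^{m-1} u_i^{(\ell)}(a)\,u_k^{(\ell)}(a)\Big)\,C_{ij}\,a_k\,u_j(t).
\]
The inner $\ell$-sum equals $[W(a)W'(a)]_{ik}$ (with $'$ denoting transpose), so by symmetry of $C$ the expression collapses to $\sum_{j,k}[C\,W(a)W'(a)]_{jk}\,a_k\,u_j(t)$; since $C=(W(a)W'(a))^{-1}$ the matrix product is the identity and this equals $\sum_j a_j u_j(t)=f(t)$. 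Thus $R_0$ is the reproducing kernel of ${\cal H}_0$. For $R_1(s,t)=\int_a^b G(s,u)G(t,u)\,du$ I would use two facts about $G$ from the Appendix: $G(s,u)$ vanishes for $u>s$ and satisfies zero initial conditions at $s=a$, and $v(s)=\int_a^b G(s,u)\phi(u)\,du$ solves $\rmL v=\phi$ with $v^{(j)}(a)=0$, equivalently $f(t)=\int_a^b G(t,u)(\rmL f)(u)\,du$ for every $f\in{\cal H}_1$. Differentiating under the integral sign, the first fact gives $\partial_s^j R_1(s,t)|_{s=a}=0$ for $j=0,\dots,m-1$, so $R_1(\cdot,t)\in{\cal H}_1$, while the second gives $(\rmL R_1(\cdot,t))(s)=G(t,s)$. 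On ${\cal H}_1$ the inner product is $\int (\rmL f)(\rmL g)$, so for $f\in{\cal H}_1$,
\[
<R_1(\cdot,t),f> = \int_a^b G(t,s)\,(\rmL f)(s)\,ds = f(t),
\]
identifying $R_1$ as the reproducing kernel of ${\cal H}_1$. With both pieces in hand, the observation following (C.3) shows $R=R_0+R_1$ reproduces on ${\cal H}^m[a,b]$: for $f=f_0+f_1$ the cross terms $<R_0(\cdot,t),f_1>$ and $<R_1(\cdot,t),f_0>$ vanish by orthogonality and the diagonal terms sum to $f_0(t)+f_1(t)=f(t)$.

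The main obstacle is the Green's-function step for $R_1$: rigorously justifying the interchange of $\rmL$ (acting in the first argument) with the $u$-integral, the identity $(\rmL R_1(\cdot,t))(s)=G(t,s)$, the membership $R_1(\cdot,t)\in{\cal H}^m[a,b]$, and the representation $f(t)=\int_a^b G(t,u)(\rmL f)(u)\,du$ on ${\cal H}_1$. These all rest on the jump and initial conditions of $G$ and on $G(s,u)=0$ for $u>s$, i.e. precisely the Appendix content in (\ref{eq:Greensint}), (\ref{eq:boundary}) and (\ref{eq:Greens}). Everything else is orthogonality bookkeeping together with the linear-algebra identity $C\,W(a)W'(a)=I$, so once the Green's-function facts are granted the verification is routine.
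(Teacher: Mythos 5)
Your proposal is correct and takes essentially the same route as the paper's proof: the same orthogonal splitting $f = f_0 + f_1$ obtained by matching initial data at $a$ through the invertible Wronskian ${W}(a)$, the same linear-algebra identity $C\,{W}(a){W}'(a) = {\rm{I}}$ to verify that $R_0$ reproduces on ${\cal{H}}_0$, and the same appeal to the Appendix results (Theorems \ref{thm:A:Greens_functiona} and \ref{thm:A:Greens_functionb}) to conclude $R_1(\cdot,t) \in {\cal{H}}_1$ and $(\rmL R_1(\cdot,t))(s) = G(t,s)$, whence $<R_1(\cdot,t),f> = \int_a^b G(t,s)(\rmL f)(s)\,ds = f(t)$ on ${\cal{H}}_1$. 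The only cosmetic differences are that you check $R_0$ against a general element of ${\cal{H}}_0$ where the paper reduces to the basis functions $u_l$, and you spell out the cross-term argument for $R = R_0 + R_1$, which the paper disposes of by its remark following condition (C.3).
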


\begin{proof} To prove the Theorem, it suffices to show the following.
\begin{itemize}
\item[(a)] Any $f$ in ${\cal{H}}^m[a,b]$ can be written as $f = f_0 + f_1$, with $f_i \in {\cal{H}}_i$  and
$<f_0,f_1> = 0$.
\item[(b)] $R_0$ is the reproducing kernel for ${\cal{H}}_0$
and $R_1$ is the reproducing kernel for ${\cal{H}}_1$.
\end{itemize}

Consider (a).
Obviously,  for $f_i \in {\cal{H}}_i$, $i=0,1$,  $<f_0, f_1>$ is
equal to zero, by the definition of the inner product in (\ref{eq:inner}).
To complete the proof of (a), fix $f \in {\cal{H}}^m[a,b]$
and  find $c_1, \cdots, c_m$ such that, if $f_0 = \sum c_i u_i$, then $f_1 =
f - f_0 \in {\cal{H}}_1$. That is, we find $c_1,\ldots, c_m$ such that, for $j = 0, \cdots,  m - 1, f ^{(j)}_1 (a) = 0$, that
is $f^{(j)}(a)  -  \sum_i c_i u^{(j)}_i (a) = 0$. Writing this in matrix notation and using the
Wronskian matrix yields
\[
(f (a), f'(a),\cdots, f^{(m - 1)}(a)) = (c_1,\cdots, c_m) {W}(a)
\]
and we can solve this for $(c_1,\cdots, c_m)$, since the Wronskian ${W}(a)$ is invertible.

Consider (b).
To prove that $R_1$ is the reproducing kernel for ${\cal{H}}_1$, first simplify notation,
fixing $t \in [a, b]$ and letting $r(s) = R_1(s, t)$.  
We must show that $r \in {\cal{H}}_1$ and that 
that $<r, f> = f (t)$ for all $f \in {\cal{H}}_1$.  Again, to simplify notation,
let  $h(u) = G(t, u)$.  By definition of $R_1$,  $r(s) = \int_a^b G(s, u)~h(u)~du$.  
By Theorems \ref{thm:A:Greens_functiona} and  \ref{thm:A:Greens_functionb},  $r \in {\cal{H}}_1$ and
L$r(s) = h(s) = G(t, s)$ almost everywhere $s$. Therefore, for $f \in {\cal{H}}_1$ ,
\[
<r, f> =  0 + \int_a^b ({\rm L} r)(s)~({\rm L} f )(s)~ ds
= \int_a^b G(t, s)~({\rm L} f )(s)~ds = f (t)
\]
by the definition of the Green's function.   See equation (\ref{eq:Greensint}).

Now consider $R_0$.  Obviously, $R_0(\cdot , t) \in {\cal{H}}_0$,
since it is a linear combination of the $u_i$'s. To show that $<R_0(\cdot , t), f> = f (t)$,
it suffices to consider $f = u_l, l = 1,\cdots, m$. Noting that L$u_l \equiv 0$, write
\begin{eqnarray*}
<R_0(\cdot , t), u_l>
&=&\sum_{i,j=1}^m C_{ij}~ u_j(t)~<u_i,u_l>  \\
&=&\sum_{i,j=1}^m C_{ij}~ u_j(t) \left[ \sum_{k=0}^{m-1} u_i^{(k)}(a) u_l^{(k)}(a)  ~~~ + 0 ~ \right]\\
&=&\sum_{i,j=1}^m C_{ij}~ u_j(t) \sum_{k=0}^{m-1} [{W}(a)]_{i,k+1}[{W}(a)]_{l,k+1}\\
&=&\sum_{i,j=1}^m C_{ij}~ u_j(t)[{{W}(a){W}'(a)}]_{li}\\
&=&\sum_{j=1}^m u_j(t)[{{W}(a){W}'(a)}{\bf{C}}]_{lj}\\
&=& u_l(t).
\end{eqnarray*}
\end{proof}

\vskip 10pt
We can now use Theorems \ref{thm:finite} and  \ref{thm:Hilbert} to write the form of the minimizer of (\ref{eq:minimum_differentialP}).  The proof
of the following Theorem is straightforward.

\vskip 10pt \noindent
\begin{theorem}
\label{thm:Hilbert2}
 Suppose that L is as in (\ref{eq:differentialP}). Let $u_1,\cdots, u_m$ be a basis for the
set of $\mu$'s with L$\mu \equiv 0$ and let $G$ be the corresponding Green's function,
defined in equations (\ref{eq:Greensint}),
(\ref{eq:boundary}) and (\ref{eq:Greens}) in the Appendix.  Let 
\[
R_1(s, t) = \int_a^b G(s, u)~ G(t, u)~ dt
\]
and
$\eta_{j1}(t) = F_j(R_1(\cdot,t))$.
Then the
minimizer of (\ref{eq:minimum_differentialP}) must be of the form
\[
\mu(t) = \sum_{j=1}^m \alpha_j u_j(t)
+\sum_{j=1}^n \beta_j  \eta_{j1}(t)
\]
where the $\alpha_j$'s and $\bbeta \equiv (\beta_1,\ldots,\beta_n)'$  minimize
\[
G( t_1,\ldots, t_n, Y_1,\ldots, Y_n, F_1(\mu),\ldots, F_n(\mu)) + \lambda \bbeta' {K} \bbeta
\]
with ${K}$ as defined in Theorem \ref{thm:finite}.
\end{theorem}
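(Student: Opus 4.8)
The plan is to obtain Theorem~\ref{thm:Hilbert2} as an immediate consequence of Theorem~\ref{thm:finite}, with Theorem~\ref{thm:Hilbert} supplying all of the Reproducing Kernel Hilbert Space structure that Theorem~\ref{thm:finite} demands. Concretely, I would verify conditions (C.1) through (C.4) for the space ${\cal{H}}^m[a,b]$ equipped with the inner product (\ref{eq:inner}), note that the relevant $F_j$ are continuous, and then simply transcribe the conclusion of Theorem~\ref{thm:finite} into this setting.

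First, Theorem~\ref{thm:Hilbert} already does most of the work: under (\ref{eq:inner}) the space ${\cal{H}}^m[a,b]$ is an RKHS that decomposes as ${\cal{H}}_0 \oplus {\cal{H}}_1$, where ${\cal{H}}_0$ is $m$-dimensional with basis $u_1,\ldots,u_m$ and ${\cal{H}}_1$ is its orthogonal complement, and it exhibits reproducing kernels $R_0$ and $R_1$ for ${\cal{H}}_0$ and ${\cal{H}}_1$ respectively. This is exactly conditions (C.1), (C.2), and (C.3). Moreover, the $R_1$ and the $\eta_{j1}(t) = F_j(R_1(\cdot,t))$ named in Theorem~\ref{thm:Hilbert2} are identical to those appearing in Theorems~\ref{thm:Hilbert} and \ref{thm:finite}, so no reconciliation of notation is required.

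The one condition still to be checked is (C.4), namely that $P(\mu) = \int_a^b (\rmL\mu)^2 = <\mu_1,\mu_1>$ when $\mu = \mu_0 + \mu_1$ with $\mu_i \in {\cal{H}}_i$; this is the only place a genuine calculation enters. Since $\mu_0$ lies in the span of the $u_i$, we have $\rmL\mu_0 = 0$ almost everywhere, and hence $\rmL\mu = \rmL\mu_1$ almost everywhere. Because $\mu_1 \in {\cal{H}}_1$ satisfies $\mu_1^{(j)}(a) = 0$ for $j = 0,\ldots,m-1$, the boundary sum in (\ref{eq:inner}) vanishes, leaving $<\mu_1,\mu_1> = \int_a^b (\rmL\mu_1)^2 = \int_a^b (\rmL\mu)^2 = P(\mu)$. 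This establishes (C.4).

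Finally I would observe that the point-evaluation functionals $f \mapsto f(t_j)$ are automatically continuous because ${\cal{H}}^m[a,b]$ is an RKHS (and continuity of any more general $F_j$ is assumed, exactly as in Theorem~\ref{thm:finite}); with every hypothesis of Theorem~\ref{thm:finite} now in force, its conclusion yields precisely the asserted form $\mu(t) = \sum_{j=1}^m \alpha_j u_j(t) + \sum_{j=1}^n \beta_j \eta_{j1}(t)$, with the coefficients minimizing $G(\ldots) + \lambda\, \bbeta' {K} \bbeta$ and ${K}[j,k] = F_j(\eta_{k1})$. There is no real obstacle here: the only step needing care is the verification of (C.4), and everything else is bookkeeping that carries the hypotheses and conclusion of Theorem~\ref{thm:finite} over to the differential-operator penalty.
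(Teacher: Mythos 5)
Your proposal is correct and matches the paper's intended argument exactly: the paper proves Theorem~\ref{thm:Hilbert2} by simply combining Theorems~\ref{thm:finite} and~\ref{thm:Hilbert}, declaring the proof ``straightforward,'' and your verification of (C.1)--(C.4) --- in particular the check that $P(\mu) = \langle \mu_1, \mu_1\rangle$ via $\rmL\mu_0 = 0$ almost everywhere and the vanishing boundary terms $\mu_1^{(j)}(a) = 0$ --- is precisely the bookkeeping the paper leaves to the reader. No gaps; your filled-in details are sound.
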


%%%%%%%%%%%%%%%%%%%%%%%%%%%%%%%%%%%%%%%%%%%%%%%%%%%%%%

\subsection{Algorithm and examples for calculating $R_0$, $R_1$ and the minimizing  ${\mu}$}
\label{sec:algorithm}

Suppose that we're given a linear differential operator L as in (\ref{eq:differentialP}).
The following steps summarize results so far, describing how to calculate $R_0$ and $R_1$, the required
reproducing kernels associated with L, and the $\mu$ that minimizes (\ref{eq:minimum_differentialP}).  

\begin{enumerate}
\item Find $u_1,\cdots, u_m$, a basis for the set of functions $\mu$ with L$\mu \equiv 0$. 
\item 
Calculate ${W}(\cdot )$, the Wronskian of the $u_i$'s: ${W}_{ij} (t) = u^{(j-1)}_i (t)$.
\item  Set 
 $R_0(s, t) = \sum_{i,j} [ [{W}(a){W}'(a)]^{-1}]_{ij} u_i(s)u_j(t)$.
\item
Calculate $(u^{*}_1(t),\cdots, u^{*}_m(t))$, the last row of the inverse of ${W}(t)$.
\item
 Find $G$, the associated Green's function: $G(t, u) = \sum u_i(t) u^{*}_i (u)$ for
$u \leq t$, 0 else.
\item
Set $R_1(s, t) = \int_a^b G(s, u)~G(t, u)~du$.
\item Find $\eta_{1j}$:  $\eta_{1j}(t) = {{F}}_j(R_1(\cdot,t))$.
\item Calculate the symmetric matrix ${K}$:  
 ${K}[j,k]= F_k(\eta_{1j})$.  If ${{F}}_j(\mu) = \mu(t_j)$ 
 and ${{F}}_k(\mu) = \mu(t_k)$ then ${K}[j,k] = R_1(t_j,t_k).$
\item Set $\mu(t) = \sum \alpha_j u_j(t) + \sum_j \beta_j \eta_{1j}(t)$ and minimize 
$G( t_1,\ldots, t_n, Y_1,\ldots, Y_n,$ $ F_1(\mu),\ldots, F_n(\mu)) + \lambda \bbeta' {K} \bbeta$ with
respect to $\bbeta$ and the $\alpha_j$'s.
\end{enumerate}

The first step is the most challenging, and for some L's, it may in fact be impossible to find the $u_j$'s
in closed form.   However, if L is
a linear differential operator with constant coefficients, then the first step is easy, using Theorem \ref{thm:A:solution}.
Alternatively, if one has an approximate model in mind defined in terms of  
known functions $u_1,\ldots, u_m$,  then one can find the corresponding L (see Example
3 below).

The reader can use these steps to derive the expressions in Section
\ref{sec:cubic} for the cubic smoothing spline.

Although the calculation of the minimizing $\mu$ does not involve $R_0$, step 3 is included for completeness,
to allow the reader to calculate the reproducing kernel, $R_0 + R_1$, for ${\cal{H}}^m[a,b]$ under the inner product
(\ref{eq:inner}).

%%%%%%%%%%%%%%%%%%%%%%%%%%%%%%%%%%%%%%%%%%%%%%%%%

\vskip 10pt \noindent
{\bf{{\em{Example 1}}}}.
Suppose that  L$\mu = \mu'$ and that the interval $[a,b]$ is equal to $[0,1]$. 
In Step 1, the basis for L$\mu \equiv 0$ is $u_1(t) = 1$.   In Step 2, the Wronskian is the one by one
matrix with element equal to 1. So in Step 3, $R_0(s,t) \equiv 1$.   In Step 4,
$u^{*}_1(s) = 1$ and so, in Step 5, $G(t, u) = 1$ if $u \leq t$, 0 else. Therefore
\[
R_1(s, t) = \int_0^{\min\{s,t\}} 1~ du = \min\{s,t\}.
\]
Thus, we seek $\mu$ of the form
\[
\mu(t) = \alpha + \sum_{j=1}^n \beta_j {{F}}_j(R_1(\cdot,t)).
\]

If ${{F}}_j(\mu) = \mu(t_j)$, $j=1,\ldots,n$, then we seek
\[
\mu(t) =  \alpha + \sum_{j=1}^n \beta_j  \min \{ t_j,t\},
\]
that is, the minimizing $\mu$ is piecewise linear with pieces defined in terms of $t_1,\ldots, t_n$.
In Step 8, ${K}[j,k]= \min \{ t_j, t_k \}$.

If, instead,  $F_j(\mu) = \int_0^1  f_j \mu$ for known $f_j$, as in Section \ref{sec:integral_F}, then 
\begin{eqnarray*}
F_j(R_1(\cdot,t) )&=& \eta_{1j}(t) = \int^1_0 f_j(s)~R_1(s, t)~ ds 
= \int^1_0 f_j(s)~\min\{s,t\}~ ds 
\\
&=&
\int^t_0 s~f_j (s)~ ds + t \int^1_t f_j(s) ~ds
\end{eqnarray*}
and, in Step 8,
\[
{K}[j,k] = \int_{t=0}^1  f_k(t)~ \eta_{1j}(t)~dt = \int _{s,t=0}^1  f_k(t) ~ f_j(s) ~ \min\{s,t\} ~ds~dt.
\]

\vskip 10pt \noindent
{\bf{\em{Example 2}}}. Suppose that L$f = f'' + \gamma f'$, $\gamma$ a real number.

For Step 1, we can find $u_1$ and $u_2$ via Theorem \ref{thm:A:solution} in the Appendix. We first solve
$x^2 + \gamma x = 0$ for the two roots, $r_1 = 0$ and $r_2 =  -\gamma$. So
\[
u_1(t) = 1 ~\text{and}~ u_2(t) = \exp( -  \gamma t).
\]
For Step 2, we compute the Wronskian
\[
{W}(t) =
\left[ {\begin{array}{cc}
1&0 \\
\exp(-\gamma t)& -\gamma \exp(-\gamma t)\\
\end{array}}\right].
\]
For Step 3 we have
\[
[{W}(a){W}'(a)]^{-1} =
\left[ {\begin{array}{cc}
1+\frac{1}{\gamma^2} &-\frac{1}{\gamma^2}\exp(\gamma a)\\
-\frac{1}{\gamma^2}\exp(\gamma a)& \frac{1}{\gamma^2}\exp(2\gamma a)\\
\end{array}}\right].
\]
So
\begin{eqnarray*}
R_0(s, t)
&=&C_{11}u_1(s)u_1(t) + C_{12}u_1(s)u_2(t) + C_{21}u_2(s)u_1(t) + C_{22}u_2(s)u_2(t)\\
&=& 1 +\frac{1}{\gamma^2}  -   \frac{1}{\gamma^2} \exp( - \gamma t^{*} )  -   \frac{1}{\gamma^2} \exp( -  \gamma s^{*} ) + \frac{1}{\gamma^2} \exp (-\gamma (s^{*}+t^{*})).
\end{eqnarray*}
with $s^{*}= s-a$ and $t^{*}= t-a$.

For Step 4, inverting ${W}(t)$ we find that
\[
u^{*}_1(t) = \frac{1}{\gamma}~ \text{and} ~ u^{*}_ 2(t) = -\frac{1}{\gamma} \exp(\gamma t)
\]
and so, in Step 5, the Green's function is given by
\begin{eqnarray*}
G(t, u) & = & \begin{cases}
\frac{1}{\gamma} \left( 1-\exp(-\gamma(t-u)) \right) & \mbox{for $u \leq t$}\\
0 & \mbox{else}.\end{cases}
\end{eqnarray*}

To find $R_1(s, t)$ in Step 6, first suppose that $s \leq t$. Then

\begin{eqnarray}
\label{eq:R1.1}
R_1(s, t)
&=&\int_a^s \gamma^{-2}(1-e^{-\gamma(s-u)})~(1-e^{-\gamma(t-u)})~du \nonumber \\
&=&-\frac{1}{\gamma^3} +\frac{s^{*}}{\gamma^2}+  \frac{1}{\gamma^3} \exp(-\gamma s^{*})
   +\frac{1}{\gamma^3}\exp(-\gamma t^{*}) \nonumber \\
& &-\frac{1}{2\gamma^3}\exp[-\gamma (t^{*}-s^{*})] - \frac{1}{2\gamma^3}\exp[-\gamma (s^{*}+t^{*})]. 
\end{eqnarray}
Since $R_1(s, t) = R_1(t, s)$, if $t < s$, then $R_1(s, t)$ is gotten by interchanging $s^{*}$
and $t^{*}$  in the above.  

Therefore, to minimize (\ref{eq:minimum_differentialP}) over $\mu \in {\cal{H}}^4[a,b]$, we seek 
$\mu$ of the form
\[
\mu(t) = \alpha_1  + \alpha_2 \exp( -  \gamma t) + \sum_1^n \beta_j F_j(R_1(\cdot,t).
\]
The calculations in Steps 7 and 8 for  $\eta_{j1}(t) = F_j(R_1(\cdot,t))$ and ${K}$ are tedious except in the case that
$F_j(f) = f(t_j)$.

\vskip 10pt\noindent
{\bf{\em{Example 3.}}}  Instead of specifying the operator L, one might more easily specify 
basis functions $u_1,\cdots, u_m$ for a preferred approximate parametric model.  For instance, one might think that
$\mu$ is approximately a constant plus a damped sinusoid:  $\mu(t) \approx \alpha_1 + \alpha_2 \sin (t)  \exp(-t)$.
Given $u_1,\cdots, u_m$, 
one can easily find the operator L so that L$u_i \equiv 0$, $i = 1,\cdots, m$, and thus one can
define an estimate of $\mu$ as the minimizer of (\ref{eq:minimum_differentialP}).   Assume 
that each $u_i$ has $m$ continuous derivatives and that
the associated Wronskian matrix ${W}(t)$ is invertible for all $t \in [a, b]$. 
To find L, we solve for the $\omega_j$'s in (\ref{eq:differentialP}):
\[
0 = (\rmL u_i)(t) = u^{(m)}_i (t) +
\sum^{m-1}_{j=0} \omega_j(t)u^{(j)}_i (t),
\]
that is
\[
u^{(m)}_i (t) = -\sum ^{m -1}_{j=0} \omega_j (t)u^{(j)}_i (t).
\]
This can be written in matrix/vector form as
\[
{W}(t)
\begin{bmatrix} 
\omega_0(t)\\
\vdots\\
\omega_{m-1}(t) \\
\end{bmatrix}
=
-  \begin{bmatrix}
u_1^{(m)}(t)\\
\vdots\\
u_m^{(m)}(t)\\
\end{bmatrix}
\]
yielding 
\[
\begin{bmatrix}
\omega_0(t)\\
\vdots\\
\omega_{m-1}(t) \\
\end{bmatrix}
= -{W}(t)^{-1}
\begin{bmatrix}
u_1^{(m)}(t)\\
\vdots\\
u_m^{(m)}(t)\\
\end{bmatrix}.
\]
Obviously, the $\omega_j$'s are continuous, by our assumptions concerning the $u_i$'s and the invertibility of
$W(t)$.

For the example with $u_1\equiv 1$ and $u_2 =\sin(t) \exp(-t)$, we find that 
\[
W(t) = \left[ \begin{matrix} 1  &  0  \\
  \sin(t) \exp(-t)  &  \exp(- t)  [ \cos(t) - \sin(t) ]
  \end{matrix}  \right],
  \]
which is invertible on $[a,b]$ provided $\cos(t) \neq \sin(t)$ for $t \in [a,b]$.    In this case,
$\omega_0(t) \equiv 0$,
$\omega_1(t) = 2 \cos (t)/[ \cos(t) -  \sin(t) ] $ and
 so the associated differential
operator is L$(\mu)(t) = \mu''(t)  +  2  \mu'(t) \cos (t)/[ \cos(t) -  \sin(t) ].$  Note that we do not need L to proceed with the minimization
of (\ref{eq:minimum_differentialP}) -- we only need $u_1,\cdots,u_m$ to calculate the required reproducing kernels.
However, if we would like to cast the problem in the Bayesian model of Section \ref{sec:Bayes}, we require L.

%%%%%%%%%%%%%%%%%%%%%%%%%%%%%%%%%%%%%%%%%%%%%%%%%

\subsection{Minimization of the penalized weighted sum of squares via matrix calculus}
\label{sec:minsquares1}

 Consider minimizing a  specific form of (\ref{eq:minimum_differentialP}) over $\mu \in {\cal{H}}^m[a,b]$, namely minimizing
 \begin{equation}
 \label{eq:G_sum_of_square}
  \sum_j d_j [Y_j -F_j(\mu)]^2  +  \lambda \int ( \rmL u )^2
 \end{equation}
for known and positive $d_j$'s.    We can rewrite this as a minimization problem
easily solved by matrix/vector calculations, provided we can find a basis $\{u_1,\ldots, u_m\}$ for the
set of $\mu$ with L$\mu = 0$.  

Theorem \ref{thm:Hilbert2} implies that,  to minimize (\ref{eq:G_sum_of_square}), we 
must find
$\boldsymbol{\alpha}=(\alpha_1,\ldots,\alpha_m)'$ and $\boldsymbol{\beta}= (\beta_1,\ldots,\beta_n)'$ to minimize
\[
%\label{eq:matrixform}
({\bf{Y}}  -   T \boldsymbol{\alpha} -   {K} \boldsymbol{\beta})' {{D}}
({\bf{Y}}  -   T \boldsymbol{\alpha}  -   {K} \boldsymbol{\beta})
+ \lambda \boldsymbol{\beta}' {K} \boldsymbol{\beta}
\]
where ${\bf{Y}}  = (Y_1,\cdots, Y_n)'$, $T $ is $n \times m$ with  $T[i,j] = u_j(t_i)$,
${K}$ is $n \times n$ with ${K}[j,k] = {F}_j(\eta_{k1})$,
and ${{D}}$ is an $n$ by $n$ diagonal matrix with ${{D}}[i,i] = d_i$. 
Assume, as is typically the case, that $T$ is of full rank and $K$ is invertible.   
Taking  derivatives with respect to $\boldsymbol{\alpha}$ and $\bbeta$ and setting equal to zero yields
\begin{equation}
\label{eq:matrixform2}
{{T'D}}({\bf{Y}}-{K} \boldsymbol{\hat{\beta}})
={{T'D}}T \hat{\boldsymbol{\alpha}}.
\end{equation}
and\[
 -2{K'D}
({\bf{Y}}  -   T \hat{\boldsymbol{\alpha}}  -   {K} \boldsymbol{\hat{\beta}})
+ 2\lambda  {K} \boldsymbol{\hat{\beta}}=0
\]
which is equivalent
to
\[
{\bf{Y}}  -   T \hat{\boldsymbol{\alpha}}
- ({K}+\lambda { D}^{-1}) \boldsymbol{\hat{\beta}} = 0.
\]
Let
\[
{{M}} = {K}+\lambda { D}^{-1}.
\]
Then
\begin{equation}
  \boldsymbol{\hat{\beta}} = {{M}}^{-1} ({\bf{Y}}  -   T \hat{\boldsymbol{\alpha}} ).
\label{eq:betahat}
\end{equation}
Substituting this into (\ref{eq:matrixform2}) yields
\[
{{T'D}}[I- {{KM}}^{-1}]{\bf{Y}}  =
{{T'D}}[I- {{KM}}^{-1}]  T\hat{\boldsymbol{\alpha}},
\]
that is
\[
{{T'D}}[{{M}}- {K}] {{M}}^{-1} {\bf{Y}}  =
{{T'D}}[{{M}}- {K}] {{M}}^{-1} T\hat{\boldsymbol{\alpha}}
\]
or
$
\lambda {{T'}}   {{M}}^{-1} {\bf{Y}}  =
\lambda {{T'}}  {{M}}^{-1} T\hat{\boldsymbol{\alpha}}.
$

Therefore, provided $T$ is of full rank,
\begin{eqnarray}
\label{eq:alpha}
\hat{\boldsymbol{\alpha}} = ({{T'M}}^{-1}  T)^{-1}{{T'M}}^{-1}  {\bf{Y}}
\end{eqnarray}
and
\begin{eqnarray}
\label{eq:beta}
\boldsymbol{\hat{\beta}} = {{M}}^{-1}
[{\rm{I}}-T ({{T'M}}^{-1}  T)^{-1} {{T'M}}^{-1}]
{\bf{Y}}.
\end{eqnarray}

Unfortunately, using equations (\ref{eq:alpha}) and (\ref{eq:beta}) results in computational problems
since typically ${{M}}$ is an ill-conditioned matrix and thus difficult to invert.  Furthermore,
${{M}}$ is $n \times n$ and $n$ is typically large, making inversion expensive.  Fortunately,
when $F_j (f) = f(t_j)$ we can transform the problem to alleviate the difficulties and to speed computation.
The details are given in the next section.

%%%%%%%%%%%%%%%%%%%%%%%%%%%%%%%%%%%%%%%%%%%%%%%%

\subsection{Algorithm for minimizing  the penalized weighted sum of
squares when $F_j (f) = f(t_j)$}
\label{sec:minsquares2}

Assume that $F_j (f) = f(t_j)$, that $a < t_1 < \cdots < t_n < b$, that $T$ is of full rank $n-m$ and that $K$ is invertible.
The goal is to re-write $\hat{\boldsymbol{{\alpha}}}$ in (\ref{eq:alpha}) and  $ \boldsymbol{\hat{\beta}} $  in (\ref{eq:beta})
so that we only need to invert small or banded matrices.  
Meeting this goal  involves defining a ``good" matrix  $Q$ and showing that
\begin{equation}
\label{eq:beta2}
\boldsymbol{\hat{\beta}} = Q({{Q'MQ}})^{-1} Q'{\bf{Y}}
\end{equation}
and
\begin{equation}
\label{eq:alpha2}
\hat{\boldsymbol{{\alpha}}} =  (T'T)^{-1} T' ({\bf{Y}} -   {{M}} {\bf{\hat{\beta}}}) .
\end{equation}

We will define $Q$ so that ${{Q'MQ}}$ is banded and thus easy to invert.
To begin, let $Q$ be an $n$ by $n-m$ matrix of full column rank such that ${{Q'T}}$ is
an $n-m$ by $m$ matrix of zeroes. $Q$ isn't unique, but later, further restrictions will be placed
on  $Q$ so that ${{Q'MQ}}$ is banded.  

We first show that ${{T'}}\boldsymbol{\hat{\beta}} = 0$. This will imply that there exists an 
$n-m$ vector $\boldsymbol{\gamma}$ such that $\boldsymbol{\hat{\beta}} = Q \boldsymbol{\gamma}$. 
From (\ref{eq:betahat}) 
\begin{equation}
\label{eq:Yhat}
{\bf{Y}} = {{M}} \boldsymbol{\hat{\beta}} + T \hat{\boldsymbol{\alpha}}
\end{equation}
Substituting this into (\ref{eq:alpha}) yields
\[
\hat{\boldsymbol{\alpha}} = ({{T'M}}^{- 1}T)^{-1}T' \boldsymbol{\hat{\beta}} + \hat{\boldsymbol{\alpha}}.
\]
Therefore
\[
({{T'M}}^{- 1}T)^{-1}T' \boldsymbol{\hat{\beta}} = 0
\]
and so $T' \boldsymbol{\hat{\beta}} = 0$ and $\boldsymbol{\hat{\beta}} = Q \boldsymbol{\gamma}$ for some $\boldsymbol{\gamma}$. To find $\boldsymbol{\gamma}$, use (\ref{eq:betahat}):
\[
{{Q'M}} \boldsymbol{\hat{\beta}} = Q'({\bf{Y}}  -   T \hat{\boldsymbol{\alpha}}) = Q'{\bf{Y}}
\]
since ${{Q'T}} = 0$. So ${{Q'MQ}}\boldsymbol{\gamma}= Q'{\bf{Y}}$, yielding
\[
\boldsymbol{\gamma} = ({{Q'MQ}})^{-1} Q'{\bf{Y}}.
\]
Therefore equation (\ref{eq:beta2}) holds.  Equation (\ref{eq:alpha2}) follows immediately
from equation (\ref{eq:Yhat}).

We can also
find an easy-to-compute form   for 
$\hat{\bf{Y}} \equiv T \hat{\boldsymbol{\alpha}} + {K} \boldsymbol{\hat{\beta}}$ using (\ref{eq:Yhat}):
\[
{\bf{Y}} = ({K}  + \lambda {{D}} ^{- 1}) \boldsymbol{\hat{\beta}} + 
T \hat{\boldsymbol{\alpha}} = \hat{\bf{Y}} + \lambda {{D}}^{-1} \boldsymbol{\hat{\beta}}
\]
and so 
\[
\hat{\bf{Y}} = {\bf{Y}}  -   \lambda {{D}}^{-1} {\boldsymbol{\hat\beta}}.
\]

Note that we have not yet used the fact that $F_j(f) = f(t_j)$.
In the special case that $F_j(f) = f(t_j)$, we can choose $Q$ so that ${{Q'MQ}}$ is banded. 
Specifically, in addition to requiring that ${{Q'T}} = 0$, we also seek $Q$ with
\begin{equation}
\label{eq:Qnice}
Q_{ij} = 0 ~\text{unless}~ i = j, j + 1,\cdots, j + m.
\end{equation}
So we want $Q$ with $[{{Q'T}}]_{ij} = \sum_{l=0}^m Q_{i+l,i}u_j(t_{i+l}) = 0$ for all $j = 1,\cdots, m,
i = 1,\cdots, n -m$. That is, for each $i$, we seek an $(m + 1)$-vector ${\bf{q}}_i \equiv
(Q_{ii},\cdots, Q_{i+m,i})'$ satisfying ${\bf{q}}_i'T_i = 0$, where $T_i$ is the $(m + 1)$ by $m $ matrix with
$lj$th entry equal to $u_j(t_{i+l})$. This is easily done by a QR decomposition of $T_i$:   the matrix
$T_i $ can be written as $T_i =  Q_i ~ R_i$ for some $Q_i $, an $(m+1) \times (m+1)$ orthonormal matrix,
 and some $ R_i, ~(m+1) \times m$ with last row equal to 0.  Take  $ {\bf{q}}_i $ to be the last column of $Q_i$.

We now show that ${{Q'MQ}}$ is banded, specifically, that $[{{Q'MQ}}]_{kl} = 0$
whenever $|k-l|>m$. Write ${{Q'MQ}} = {{Q'KQ}} + \lambda {{Q'D}}^{-1}Q$. Since
${{D}}$ is diagonal, one easily shows that $[{{QD}}^{- 1}Q]_{kl} = 0$ for $|k-l|>m$. To show that the same is
true for ${{Q'KQ}}$, write
\begin{eqnarray}
{K}[i,j] 
&=& R_1(t_i, t_j) \nonumber \\
&=& \int G(t_i, \omega)~G(t_j, \omega)~ d\omega \nonumber \\
&=& \sum_{r,s} u_r(t_i)u_s(t_j) \int^{\min\{t_i,t_j\}}_a u^{*}_r(\omega)~u^{*}_s(\omega)~ d\omega\nonumber \\
&\equiv& \sum_{r,s} u_r(t_i)u_s(t_j) ~ {\cal{F}}_{r,s}(\min\{t_i,t_j\}). 
\nonumber \\
&=& \sum_{r,s} T_{ir}  T_{js}  ~ {\cal{F}}_{r,s}(\min\{t_i,t_j\}). 
%\label{eq:K}
\nonumber
\end{eqnarray}
Since ${{Q'KQ}}$ is symmetric, it suffices to show that $[{{Q'KQ}}]_{kl} = 0$ for $k-l>m$. So fix $k$ and $l$ with
$k-l > m$ and write
\begin{eqnarray*}
[{{Q'KQ}}]_{kl} 
&=& \sum_{i,j=1}^n Q_{ik} K_{ij} Q_{jl}
= \sum_{i,j=0}^m Q_{k+i,k} K_{k+i, l+j} Q_{l+j,l}
\\
&=&
\sum_{i,j=0}^m \sum_{r,s=1}^m Q_{k+i,k} ~ {\cal{ F}}_{r,s}(~\min\{t_{k+i},t_{l+j}\})~ T_{k+i,r} T_{l+j, s} Q_{l+j,l}\\
&= &\sum_{j=0}^m \sum_{r,s=1}^m  ~ {\cal{F}}_{r,s}(t_{l+j}) ~ T_{l+j, s} Q_{l+j,l} 
     \sum_{i=0}^m Q_{k+i,k} T_{k+i, r}.
\end{eqnarray*}
The last equality follows since $k > l + m$ and $ 0 \le i,j \le m$ imply that  $k+i > l+j $ and so $t_{l+j} < t_{k+i}$.
We immediately have that $[{{Q'KQ}}]_{kl}=0$, since  $\sum_{i=0}^m Q_{k+i,k} T_{k+i, r} = [{{Q'T}}]_{kr}= 0$.

\vskip 15pt

Thus minimizing
 (\ref{eq:G_sum_of_square}) when $F_j (f) = f(t_j)$  is easily and quickly done through the following steps.
 
 \begin{enumerate}
 \item
  Follow steps 1 through 8 of Section \ref{sec:algorithm}
   to find $u_1,\cdots, u_m$, a basis for L$\mu = 0$,
the reproducing kernel $R_1$ and the matrix 
${K}$:   $ {K}[i,j] = R_1(t_i, t_j)$.
\item
 Calculate the matrix  $T$:   $T[i,j] = u_j(t_i)$.
\item
 Find $Q$ $n$ by $(n-m)$ of full column rank satisfying   equation (\ref{eq:Qnice}) and ${{Q'T}} = 0$.
One can find  $Q$ directly or by the method outlined below equation (\ref{eq:Qnice}).
\item Find
$\hat{\bbeta}$ and $\hat{\balpha}$ using equations (\ref{eq:beta2}) and (\ref{eq:alpha2}).
Speed  the matrix inversion by using the fact that $Q'{{M}} Q$ is banded. \end{enumerate}

%%%%%%%%%%%%%%%%%%%%%%%%%%%%%
\vskip 10pt 
\noindent
{\bf{\em{Example 2 continued from Section \ref{sec:algorithm}}}}.
Suppose that we want to minimize
\[
\sum_{j=1}^n d_j (Y_j  - u_(t_j))^2 
+ \lambda \int^1_0 (\mu''(t) + \gamma \mu'(t))^2 ~dt
\]
over $\mu \in {\cal{H}}^2[0, 1]$. For simplicity,
assume that $t_i = i/(n + 1)$.
Using the calculations from Section \ref{sec:algorithm},  we set
$T_{i1} = 1, T_{i2} = \exp( -  \gamma t_i)$, and ${K}[i,j] = R_1(t_i, t_j)$, with
$R_1$ as in (\ref{eq:R1.1}).

For Step 3, we find $Q$ directly:  we seek $Q$ $n$ by $(n -2)$ with $Q_{ij} = 0$ unless $i = j, j + 1, j + 2$ and
\[
0 = [{{Q'T}}]_{ij}
= Q_{ii}T_{ij} + Q_{i,i+1}T_{i+1,j} + Q_{i,i+2}T_{i+2,j}.
\]
Thus, for $j=1$,
\[
0 = Q_{ii} + Q_{i,i+1} + Q_{i,i+2}
\]
and, for $j=2$,
\[
0 = Q_{ii} \exp( - \gamma t_i) + Q_{i,i+1} \exp( - \gamma t_{i+1}) + Q_{i,i+2} \exp( - \gamma t_{i+2}).
\]
We take
\[
Q_{ii} = 1-\exp\left( -\frac{\gamma} {n + 1}\right) \quad
Q_{i,i+1} = - \exp \left(\frac{\gamma} {n + 1}\right) +   \exp \left(- \frac{\gamma} {n + 1}\right)
\]
and
\[
Q_{i,i+2} = \exp\left(\frac{\gamma} {n + 1}\right) -   1:
\]
Continuing with the fourth step to find ${\bf{\hat{\alpha}}}$ and ${\bf{\hat{\beta}}}$ is straightforward.

%%%%%%%%%%%%%%%%%%%%%%%%%%%%%%%%%%%%%%%%%%%%

\appendix

\section{}

The Appendix  contains background on the solution of linear differential equations  L$\mu = 0$ with L as
in (\ref{eq:differentialP}). Section  \ref{sec:A:Greensfunction} contains results about $G$, the Green's function
associated with L.

\subsection{Differential Equations}
\label{sec:A:diffeq}

 Details of results in this section can be found
in Coddington \cite{Coddington}. The main Theorem, stated without proof, follows.

\noindent
\begin{theorem}
\label{thm:A:Lnullbasis}
 Let L be as in (\ref{eq:differentialP}). Then there exists $u_1,\cdots, u_m$ a basis for the
the set of all $\mu$ with  L$\mu \equiv 0$, with each $u_i$ real-valued and having $m$ derivatives. Furthermore,
any such basis will have an invertible Wronskian matrix ${W}(t)$ for all $t \in [a,b]$.
The Wronskian matrix is defined as
\[
[{W}(t)]_{ij} = u^{(j -  1)}_i \quad i, j = 1,\cdots, m.
\]
\end{theorem}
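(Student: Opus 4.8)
The plan is to establish this in two parts: first the existence of a basis of solutions with the claimed smoothness, and second the invertibility of the Wronskian at every point of $[a,b]$. Since the statement explicitly says it is stated without proof and refers to Coddington \cite{Coddington}, a full development is not expected; instead I would assemble the standard existence--uniqueness theory for linear ODEs into the desired form. First I would rewrite the scalar equation $\mathrm{L}\mu = 0$ as a first-order linear system. Setting $\mathbf{x}(t) = (\mu(t), \mu'(t), \ldots, \mu^{(m-1)}(t))'$, the equation $\mu^{(m)}(t) = -\sum_{j=0}^{m-1} w_j(t)\mu^{(j)}(t)$ becomes $\mathbf{x}'(t) = A(t)\mathbf{x}(t)$, where $A(t)$ is the companion matrix built from the continuous coefficients $w_j$. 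Because the $w_j$ are continuous on the compact interval $[a,b]$, the entries of $A$ are continuous, so the Picard--Lindel\"of theorem (global version for linear systems) guarantees a unique solution on all of $[a,b]$ for each initial condition $\mathbf{x}(a) = \mathbf{c}$.

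Next I would produce the basis. For each standard basis vector $\mathbf{e}_i \in \Re^m$, let $u_i$ be the first coordinate of the unique solution with $\mathbf{x}(a) = \mathbf{e}_i$; equivalently, $u_i$ is the solution of $\mathrm{L}u_i = 0$ with initial derivative data $u_i^{(j-1)}(a) = \delta_{ij}$. Each $u_i$ is real-valued and, since $\mu^{(m)} = -\sum_j w_j \mu^{(j)}$ expresses the top derivative as a continuous combination of lower ones, each $u_i$ has $m$ (indeed continuous $m$th) derivatives. Linear independence of $u_1,\ldots,u_m$ follows because their initial data vectors $\mathbf{e}_1,\ldots,\mathbf{e}_m$ are independent, and that $m$-dimensional solution space is exactly the kernel of $\mathrm{L}$ by the uniqueness half of Picard--Lindel\"of: any solution is determined by its initial data, so the map sending a solution to $\mathbf{x}(a)$ is a linear isomorphism onto $\Re^m$. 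Hence $\{u_1,\ldots,u_m\}$ spans, and the solution set has dimension exactly $m$.

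For the Wronskian claim, I would invoke Abel's (Liouville's) formula, which is the main obstacle in the sense that it is the one nonobvious ingredient. Writing $W(t)$ for the Wronskian matrix $[W(t)]_{ij} = u_i^{(j-1)}(t)$, its determinant satisfies the first-order scalar ODE $\frac{d}{dt}\det W(t) = -w_{m-1}(t)\,\det W(t)$ (the trace of the companion matrix being $-w_{m-1}$), whose solution is $\det W(t) = \det W(t_0)\exp\!\left(-\int_{t_0}^{t} w_{m-1}(s)\,ds\right)$. The exponential factor never vanishes, so $\det W$ is either identically zero or nowhere zero on $[a,b]$. For my particular basis, $W(a)$ is the identity matrix by construction, so $\det W(a) = 1 \neq 0$; therefore $\det W(t) \neq 0$ for all $t \in [a,b]$. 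Finally, to get the stronger statement that \emph{any} basis has an invertible Wronskian, I would note that a change of basis multiplies $W(t)$ on the left by a fixed invertible matrix (the change-of-basis matrix is constant in $t$), which preserves nonvanishing of the determinant. The heart of the argument is thus Abel's formula combined with the global linear existence--uniqueness theorem; everything else is bookkeeping that I would cite to Coddington rather than reprove.
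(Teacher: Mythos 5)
Your proof is correct, and there is in fact nothing in the paper to compare it against: the paper states this theorem explicitly without proof, delegating to Coddington \cite{Coddington}, and what you have written is a faithful assembly of that standard theory. All the pieces are in order: the reduction of $\rmL\mu = 0$ to the first-order system $\mathbf{x}' = A(t)\mathbf{x}$ with the companion matrix, global existence and uniqueness on $[a,b]$ from continuity (hence boundedness) of the $w_j$'s, the linear isomorphism between the solution space and the initial-data space $\Re^m$ (which gives both spanning and dimension exactly $m$), the bootstrapped smoothness via $u_i^{(m)} = -\sum_j w_j u_i^{(j)}$, and the reduction of the ``any basis'' claim to your particular basis through a constant invertible change-of-basis matrix multiplying $W(t)$ on the left (note the paper's $W(t)$ is the transpose of the usual fundamental matrix, which leaves all determinant statements unchanged). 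One remark: Abel's formula, which you single out as the one nonobvious ingredient, can be bypassed entirely. If $\det W(t_0) = 0$ for some $t_0 \in [a,b]$, then some nontrivial combination $v = \sum_i c_i u_i$ satisfies $v^{(j)}(t_0) = 0$ for $j = 0, \ldots, m-1$; since $\rmL v \equiv 0$, the same uniqueness theorem you already invoked forces $v \equiv 0$, contradicting linear independence of the $u_i$'s. This is precisely the argument pattern the paper itself uses in proving Theorem \ref{thm:A:inner}, run in the contrapositive direction, and it makes the Wronskian claim a one-line corollary of uniqueness rather than a separate computation; Abel's formula buys the quantitative extra $\det W(t) = \det W(a)\exp\left(-\int_a^t w_{m-1}(s)\,ds\right)$, which is not needed here. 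Either route is sound.
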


The following Theorem, stated without proof, is useful for calculating the
basis functions in the case that the $\omega_j$'s are constants.

\begin{theorem}
\label{thm:A:solution}
 Suppose that L is as in (\ref{eq:differentialP}), with the $\omega_j$'s real numbers. Denote
the $s$ distinct roots of the polynomial $x^m + \sum^{m - 1}_{j=0} \omega_j x^j$ as $r_1,\cdots, r_s$. Let $m_i$
denote the multiplicity of root $r_i$ (so $m = \sum^s_1 m_i$). Then the following $m$
functions of t form a basis for the set of all $\mu$ with  L$\mu \equiv 0$:
\[
\exp(r_i t), t \exp(r_i t),\cdots, t^{m_i - 1} \exp(r_i t) \quad  i = 1,\cdots, s.
\]
\end{theorem}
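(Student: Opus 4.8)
The plan is to exploit the fact that, for constant coefficients, $\rmL$ is a polynomial in the differentiation operator $D = d/dt$. Writing $p(x) = x^m + \sum_{j=0}^{m-1} \omega_j x^j$, we have $\rmL = p(D)$, and by the Fundamental Theorem of Algebra $p$ factors over the complex numbers as $p(x) = \prod_{i=1}^s (x - r_i)^{m_i}$, so that $\rmL = \prod_{i=1}^s (D - r_i)^{m_i}$ is a product of commuting operators. By Theorem \ref{thm:A:Lnullbasis} the solution space of $\rmL\mu \equiv 0$ has dimension exactly $m$, and the proposed list contains exactly $\sum_{i=1}^s m_i = m$ functions. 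It therefore suffices to prove two things: (a) each listed function is a solution, and (b) the listed functions are linearly independent.

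For (a), I would first establish the identity $(D - r)(t^k e^{rt}) = k\, t^{k-1} e^{rt}$ by the product rule, and then iterate it to conclude $(D - r)^{m_i}(t^k e^{r_i t}) = 0$ whenever $0 \le k \le m_i - 1$, since each application lowers the power of $t$ by one and $m_i$ applications more than exhaust a power strictly below $m_i$. Because the factors of $\rmL = \prod_i (D - r_i)^{m_i}$ commute, I may let $(D - r_i)^{m_i}$ act first on $t^k e^{r_i t}$; it annihilates the function, and hence so does all of $\rmL$. This disposes of every function in the list.

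For (b), the key lemma is that if $r_1, \ldots, r_s$ are distinct and $P_1, \ldots, P_s$ are polynomials with $\sum_{i=1}^s P_i(t) e^{r_i t} \equiv 0$, then every $P_i \equiv 0$; applied to $P_i(t) = \sum_{k=0}^{m_i - 1} c_{ik} t^k$, this forces all coefficients $c_{ik}$ to vanish. I would prove the lemma by induction on $s$: multiplying through by $e^{-r_s t}$ and then differentiating $\deg(P_s) + 1$ times annihilates the $P_s$ term, while each surviving summand $P_i(t) e^{(r_i - r_s)t}$ becomes $\tilde P_i(t) e^{(r_i - r_s)t}$ with $\tilde P_i$ of the same degree as $P_i$ (and nonzero whenever $P_i$ is, since $r_i - r_s \neq 0$ keeps the leading coefficient nonzero). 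The induction hypothesis then gives $\tilde P_i \equiv 0$, hence $P_i \equiv 0$ for $i < s$, and substituting back yields $P_s \equiv 0$ as well.

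I expect step (b), and specifically the bookkeeping in the inductive step showing that repeated differentiation preserves the degrees of the surviving polynomials, to be the main obstacle; everything in (a) reduces to a single product-rule computation. One remaining point to address is that for non-real roots $r_i$ the listed functions are complex-valued, whereas Theorem \ref{thm:A:Lnullbasis} furnishes a real basis. Since the $\omega_j$ are real, the non-real roots occur in conjugate pairs, and replacing each conjugate pair $e^{r_i t}, e^{\bar r_i t}$ by their real and imaginary parts yields a real basis spanning the same real solution space.
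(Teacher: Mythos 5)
Your proof is correct, but there is nothing in the paper to compare it against: the paper states this theorem explicitly ``without proof,'' deferring to Coddington \cite{Coddington} for details, so you have in effect supplied the standard textbook argument that the paper omits. Your route is the classical one: factor $\rmL = p(D) = \prod_i (D-r_i)^{m_i}$ into commuting operators, kill each $t^k e^{r_i t}$ ($k \le m_i - 1$) via the identity $(D-r)(t^k e^{rt}) = k\,t^{k-1}e^{rt}$, establish linear independence through the lemma that $\sum_i P_i(t)e^{r_i t} \equiv 0$ with distinct $r_i$ forces all $P_i \equiv 0$ (your induction, multiplying by $e^{-r_s t}$ and differentiating $\deg P_s + 1$ times, is the standard proof and the degree bookkeeping is right: each differentiation multiplies the leading coefficient of the surviving polynomials by the nonzero constant $r_i - r_s$), and then invoke the dimension count $m$ from Theorem \ref{thm:A:Lnullbasis} to upgrade $m$ independent solutions to a basis. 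Two remarks. First, you were right to flag the real-versus-complex issue, and it is slightly more than a formality: the paper's solution space consists of \emph{real-valued} functions (Theorem \ref{thm:A:Lnullbasis} produces real $u_i$'s, and the Wronskian invertibility used throughout Section \ref{sec:general_differential} is for that real basis), whereas for non-real $r_i$ the listed functions are complex-valued, so the theorem as stated is literally about the complexified solution space; your repair --- conjugate pairs replaced by $e^{\alpha t}\cos\beta t$ and $e^{\alpha t}\sin\beta t$ --- is the correct way to recover a real basis of the same dimension, though note that your dimension count then needs the small observation that the complex solution space has complex dimension $m$ (e.g., because it is the complexification of the real one). Second, your proof leans on Theorem \ref{thm:A:Lnullbasis}, itself unproved in the paper, for the dimension bound; that is consistent with the paper's framework, but a fully self-contained argument would either prove existence-uniqueness for the initial value problem or show directly that your $m$ functions span, e.g.\ by induction on the factorization using variation of parameters.
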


The following result, stated without proof, is useful for checking that a
set of functions does form a basis for the set of all $\mu$ with  L$\mu \equiv 0$.

\begin{theorem}
Suppose that $u_1,\cdots, u_m$ have $m$ derivatives on $[a, b]$ and that
L$u_i \equiv 0$.  If ${W}(t_0)$ is invertible at some $t_0 \in [a, b]$, then the $u_i$'s are linearly
independent, and thus a basis for the set of all $\mu$ with  L$\mu \equiv 0$.
\end{theorem}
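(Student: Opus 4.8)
The plan is to establish linear independence directly from the invertibility of $W(t_0)$, and then to upgrade linear independence to the basis property by invoking Theorem \ref{thm:A:Lnullbasis}. To show linear independence I would suppose that some linear combination vanishes identically, say $\sum_{i=1}^m c_i u_i \equiv 0$ on $[a,b]$ for scalars $c_1,\cdots, c_m$, and aim to conclude that every $c_i = 0$.

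First I would exploit the hypothesis that each $u_i$ has $m$ derivatives. Differentiating the identity $\sum_i c_i u_i \equiv 0$ a total of $j-1$ times, for $j = 1,\cdots, m$, yields $\sum_{i=1}^m c_i u_i^{(j-1)} \equiv 0$ on all of $[a,b]$. Evaluating each of these $m$ identities at the single point $t_0$ produces the $m$ scalar equations $\sum_{i=1}^m c_i u_i^{(j-1)}(t_0) = 0$. Recognizing that $u_i^{(j-1)}(t_0) = [W(t_0)]_{ij}$ by the definition of the Wronskian matrix, I would repackage these equations in matrix form as $c' W(t_0) = 0'$, equivalently $W(t_0)' c = 0$, where $c = (c_1,\cdots, c_m)'$.

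The key step is then immediate: since $W(t_0)$ is invertible, so is its transpose $W(t_0)'$, and hence the only solution of $W(t_0)' c = 0$ is $c = 0$. This forces $c_1 = \cdots = c_m = 0$, establishing that $u_1,\cdots, u_m$ are linearly independent.

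Finally, to conclude that the $u_i$'s form a basis for the set of all $\mu$ with $\rmL\mu \equiv 0$, I would appeal to Theorem \ref{thm:A:Lnullbasis}, which guarantees that this solution set is an $m$-dimensional linear space. Since $u_1,\cdots, u_m$ are $m$ linearly independent elements of an $m$-dimensional space, they span it and therefore form a basis. I do not anticipate a serious obstacle: this is the standard Wronskian argument, and the only point requiring care is that differentiating the vanishing linear combination up to order $m-1$ is legitimate, which is exactly what the assumption that each $u_i$ possesses $m$ derivatives provides.
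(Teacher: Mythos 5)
Your proof is correct: the paper states this theorem without proof (``The following result, stated without proof, \dots''), and your argument is the standard Wronskian argument it implicitly intends. Indeed, your key manipulation---writing the evaluated identities $\sum_i c_i u_i^{(j-1)}(t_0)=0$ as $(c_1,\cdots,c_m)W(t_0)=(0,\cdots,0)$ and invoking invertibility of $W(t_0)$---is exactly the same matrix/vector step the paper itself uses in its proof of Theorem \ref{thm:A:inner}, and your appeal to Theorem \ref{thm:A:Lnullbasis} for the $m$-dimensionality of the solution space correctly upgrades linear independence to the basis property.
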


The following result was useful in defining the inner product in equation (\ref{eq:inner}), 
where $t_0$ was taken to be $a$.

\begin{theorem} 
\label{thm:A:inner}
Suppose that L is as in (\ref{eq:differentialP}) and let $t_0 \in [a, b]$. Then the only
function in ${\cal{H}}^m[a,b]$ that satisfies L$f =$ the zero function and $f^{(j)}(t_0) = 0, j =
0,\cdots, m - 1$, is the zero function.
\end{theorem}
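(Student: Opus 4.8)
The plan is to recognize this statement as the uniqueness half of the existence–uniqueness theorem for the linear initial value problem $\rmL f = 0$, $f^{(j)}(t_0)=0$, and to prove it directly by reducing to a first–order system and invoking Gronwall's inequality. The only genuine wrinkle is that $f \in {\cal{H}}^m[a,b]$ is differentiable $m$ times only almost everywhere, so I must phrase everything in terms of absolute continuity rather than classical derivatives.

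First I would assemble the vector $\mathbf{y}(t) = (f(t), f'(t), \ldots, f^{(m-1)}(t))'$. By the definition of ${\cal{H}}^m[a,b]$, each component $f^{(j)}$, $j=0,\ldots,m-1$, is absolutely continuous, so $\mathbf{y}$ is absolutely continuous and differentiable almost everywhere. For $j=0,\ldots,m-2$ we have $(f^{(j)})' = f^{(j+1)}$ almost everywhere, and the hypothesis $\rmL f = 0$ almost everywhere gives $f^{(m)} = -\sum_{j=0}^{m-1} w_j f^{(j)}$ almost everywhere. Collecting these, $\mathbf{y}$ satisfies $\mathbf{y}'(t) = A(t)\mathbf{y}(t)$ almost everywhere, where $A(t)$ is the companion matrix built from $1$'s and the coefficients $-w_j(t)$; its entries are continuous on $[a,b]$ because the $w_j$ are.

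Next, because $\mathbf{y}$ is absolutely continuous and $\mathbf{y}(t_0)=0$, I can integrate the differential relation to obtain the integral equation $\mathbf{y}(t) = \int_{t_0}^t A(s)\mathbf{y}(s)\,ds$, valid for every $t\in[a,b]$ — this is where absolute continuity, rather than classical differentiability, is exactly what is needed, since the almost-everywhere identity integrates to an everywhere identity. Setting $M = \max_{s\in[a,b]}\|A(s)\| < \infty$ and $\phi(t) = \|\mathbf{y}(t)\|$, I would bound $\phi(t) \le M\,\bigl|\int_{t_0}^t \phi(s)\,ds\bigr|$ and apply Gronwall's inequality (the absolute value covering both $t\ge t_0$ and $t\le t_0$) to conclude $\phi \equiv 0$. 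Hence $\mathbf{y}\equiv 0$, so $f\equiv 0$, which is the claim.

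I expect the main obstacle — though a minor one — to be the bookkeeping that legitimizes passing from $\rmL f = 0$ holding only almost everywhere to the everywhere-valid integral equation; this is precisely resolved by the absolute continuity built into ${\cal{H}}^m[a,b]$. An alternative route, avoiding Gronwall, would first use the continuity of the $w_j$ to observe that $f^{(m)}$ agrees almost everywhere with the continuous function $-\sum_j w_j f^{(j)}$, thereby upgrading $f$ to a classical solution of $\rmL f = 0$; then by Theorem \ref{thm:A:Lnullbasis} one writes $f = \sum_i c_i u_i$, and the initial conditions give $(c_1,\ldots,c_m)\,W(t_0) = 0$, so the invertibility of the Wronskian $W(t_0)$ forces the coefficient vector to vanish and $f\equiv 0$.
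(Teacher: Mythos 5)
Your proof is correct, but your primary route is genuinely different from the paper's. The paper proves this result in three lines by leaning on Theorem \ref{thm:A:Lnullbasis} (stated without proof, from Coddington): since $\rmL f \equiv 0$, write $f = \sum_i c_i u_i$ in the basis of the null space, note that the initial conditions say $(c_1,\ldots,c_m)W(t_0) = (0,\ldots,0)$, and invoke the invertibility of the Wronskian $W(t_0)$ to force $c_i = 0$ — which is precisely the ``alternative route'' you sketch in your last paragraph. Your main argument instead reduces to the first-order system $\mathbf{y}' = A(t)\mathbf{y}$ a.e., integrates to $\mathbf{y}(t) = \int_{t_0}^t A(s)\mathbf{y}(s)\,ds$ using absolute continuity, and closes with Gronwall; this is self-contained, requiring neither the existence of the basis $u_1,\ldots,u_m$ nor the invertibility of the Wronskian. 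It also buys something the paper glosses over: the paper's step ``$\rmL f \equiv 0$ implies $f = \sum c_i u_i$'' tacitly identifies an ${\cal{H}}^m[a,b]$ function satisfying $\rmL f = 0$ only almost everywhere with a classical solution lying in the span of the classical basis — the upgrading argument you give (that $f^{(m)}$ agrees a.e.\ with the continuous function $-\sum_j w_j f^{(j)}$, so $f$ is in fact a classical solution) is exactly the bookkeeping needed to justify this, and your Gronwall route sidesteps the issue entirely by working with the integral equation. In short: the paper's proof is shorter but imports unproved machinery and elides the a.e.\ subtlety; yours is longer but elementary, and your closing alternative, with the upgrading step included, is a more careful version of the paper's own argument.
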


\begin{proof}
 By Theorem \ref{thm:A:Lnullbasis}, there exists $u_1,\cdots, u_m$ a basis for the set of all $\mu$ with  L$\mu \equiv 0$,
  with
${W}(t)$ invertible for all $t \in [a, b]$. Suppose L$f \equiv 0$. Then $f = \sum_i c_iu_i$ for
some $c_i$'s. We see that the conditions $f^{(j)}(t_0) = 0, ~ j = 0,\cdots, m - 1$ can
be written in matrix/vector form as $(c_1,\cdots, c_m){W}(t_0) = (0,\cdots, 0)$. Since
${W}(t_0)$ is invertible, $c_i = 0, i = 1,\cdots, m$.
\end{proof}

\subsection{The Green's Function Associated with the Differential Operator L}
\label{sec:A:Greensfunction}

Suppose that L is as in (\ref{eq:differentialP}).
The definition below gives the definition of $G(\cdot , \cdot )$, the Green's function associated with 
 L with specified boundary conditions.
Theorem \ref{thm:A:Greens_functiona} gives an explicit  form of $G$. 

\vskip 10pt \noindent
\textbf{Definition}. $G$ is a Green's function for L if and only if
\begin{equation}
\label{eq:Greensint}
f(t) = \int^b_{u=a} G(t, u) ~(\rmL f)(u)~ du
\end{equation}
for all functions $f$ in ${\cal{H}}^m[a,b]$ satisfying the boundary conditions
\begin{equation}
\label{eq:boundary}
f^{(j)}(a) = 0, j = 0,\cdots, m-1. 
\end{equation}
\vskip 10pt

Of course, 
 it's not immediately clear that such a function $G$ exists. However, $G$ exists and is easily calculated using the Wronskian matrix associated with L
 (see Theorem \ref{thm:A:Greens_functiona}). Recall from Theorem \ref{thm:A:Lnullbasis} of Section 
\ref{sec:A:diffeq} that there exists a basis for
the set of all $\mu$ with  L$\mu \equiv 0$, $u_1,\cdots, u_m$, with invertible Wronskian. Furthermore, each $u_i$
has $m$ derivatives. 

\begin{lemma}
 Let $u^{*}_1(t),\cdots, u^{*}_m(t)$ denote the entries in the last row of the
inverse of ${W}(t)$. Then
$u^*_j$ is continuous, $j=1,\ldots,m$.
\end{lemma}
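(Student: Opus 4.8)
The plan is to show that each $u^*_j$, the entries in the last row of $W(t)^{-1}$, is continuous on $[a,b]$ by tracing the continuity through the explicit matrix-inversion formula. The key observation is that $W(t)^{-1}$ can be written via Cramer's rule as the adjugate of $W(t)$ divided by $\det W(t)$, so the entries of the inverse are ratios of polynomial expressions in the entries of $W(t)$. Since $[W(t)]_{ij} = u_i^{(j-1)}(t)$, the entries of $W(t)$ are derivatives of the basis functions $u_1,\ldots,u_m$, and by Theorem \ref{thm:A:Lnullbasis} each $u_i$ has $m$ derivatives, so in particular each entry $u_i^{(j-1)}(t)$ for $j=1,\ldots,m$ is continuous (being at worst the $(m-1)$st derivative of $u_i$, which exists and is continuous since $u_i^{(m)}$ exists).

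First I would note that the entries of $W(t)$ are continuous functions of $t$. Next, $\det W(t)$ is a polynomial in these continuous entries, hence continuous; moreover, by Theorem \ref{thm:A:Lnullbasis}, $W(t)$ is invertible for all $t \in [a,b]$, so $\det W(t) \neq 0$ throughout $[a,b]$. Each cofactor of $W(t)$ is also a polynomial in the continuous entries of $W(t)$, hence continuous. Therefore, by Cramer's rule, each entry of $W(t)^{-1}$ equals a cofactor divided by $\det W(t)$, which is a quotient of continuous functions with nowhere-vanishing denominator, and so is continuous. In particular, the entries $u^*_1(t),\ldots,u^*_m(t)$ in the last row of $W(t)^{-1}$ are continuous.

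I do not anticipate a genuine obstacle here; the statement is essentially an exercise in propagating continuity through the algebraic formula for a matrix inverse. The one point requiring a moment's care is confirming that every entry of $W(t)$ is continuous: the entry in column $j$ involves $u_i^{(j-1)}$, and for $j=m$ this is $u_i^{(m-1)}$, whose continuity follows from the fact that $u_i^{(m)}$ exists on $[a,b]$ (so $u_i^{(m-1)}$ is differentiable, hence continuous). The other essential ingredient is the non-vanishing of $\det W(t)$, which is exactly the invertibility of $W(t)$ guaranteed by Theorem \ref{thm:A:Lnullbasis}, so that the quotient in Cramer's rule never encounters a zero denominator anywhere on the compact interval $[a,b]$.
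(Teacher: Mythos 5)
Your proposal is correct and follows essentially the same route as the paper's own proof: the paper likewise expresses $u^*_i$ as $(\det W(t))^{-1}$ times a polynomial expression (sums and products) in the derivatives $u_l^{(j)}$, $j = 0,\ldots,m-1$, which are continuous, with the nonvanishing of $\det W(t)$ coming from the invertibility of $W(t)$ guaranteed by Theorem \ref{thm:A:Lnullbasis}. Your write-up merely makes the Cramer's-rule/adjugate structure and the denominator's nonvanishing more explicit than the paper does.
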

\begin{proof}
The $u^{*}_i$'s are continuous, since $u^{*}_i= (\det {W}(t))^{-1}$ times an expression involving sums and products of $u^{(j)}_l , l = 1,\cdots, m, j = 0,\cdots, m -1$, and
the $u_l$'s have $m-1$ continous derivatives. 
\end{proof}

\begin{theorem}
\label{thm:A:Greens_functiona}
 Let $u^{*}_1(t),\cdots, u^{*}_m(t)$ denote the entries in the last row of the
inverse of ${W}(t)$. Then
\begin{eqnarray}
G(t, u)
& = & \begin{cases}
\sum_{i=1}^m u_i(t) u_i^{*}(u) & \mbox{for $u \leq t$} \\
0 & \mbox{otherwise}
\label{eq:Greens}
\end{cases}
%\label{eq:Greens}
\end{eqnarray}
is a Green's function for L and, for each fixed $t \in [a,b]$, $G(t, \cdot)$ is in $L^2[a,b]$.
\end{theorem}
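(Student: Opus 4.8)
The plan is to prove the Green's-function identity (\ref{eq:Greensint}) by the method of variation of parameters, exploiting the inverse Wronskian that defines the $u_i^*$, and then to close the argument with the uniqueness statement of Theorem \ref{thm:A:inner}. The $L^2$ claim I would dispose of first, since it is immediate: for fixed $t\in[a,b]$, as a function of $u$ the kernel in (\ref{eq:Greens}) equals $\sum_{i=1}^m u_i(t)\,u_i^*(u)$ on $[a,t]$ and $0$ on $(t,b]$. Each $u_i^*$ is continuous on the compact interval $[a,b]$ (by the preceding Lemma) and the numbers $u_i(t)$ are constants, so $G(t,\cdot)$ is a bounded measurable function and hence lies in $L^2[a,b]$.

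For the main assertion, fix $f\in{\cal{H}}^m[a,b]$ satisfying the boundary conditions (\ref{eq:boundary}), put $\phi=\rmL f$ (so $\phi\in L^2[a,b]$), and define $c_i(t)=\int_a^t u_i^*(u)\,\phi(u)\,du$ and
\[
y(t)=\sum_{i=1}^m c_i(t)\,u_i(t)=\int_a^b G(t,u)\,\phi(u)\,du.
\]
The crux is the pointwise identity
\[
\sum_{i=1}^m u_i^{(k)}(t)\,u_i^*(t)=\delta_{m,\,k+1},\qquad k=0,\dots,m-1,
\]
which follows directly from $u_i^*(t)=[{W}(t)^{-1}]_{mi}$ and $u_i^{(k)}(t)=[{W}(t)]_{i,k+1}$, since the left-hand side is exactly $[{W}(t)^{-1}{W}(t)]_{m,k+1}$. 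Since each $c_i$ is absolutely continuous with $c_i'=u_i^*\phi$ almost everywhere and the $u_i$ are smooth, I would use the Leibniz rule and an induction on $k$: the $c_i'$-terms carry the factor $\sum_i u_i^{(k)}u_i^*=\delta_{m,k+1}$, which vanishes for $k\le m-2$, so $y^{(k)}=\sum_i c_i u_i^{(k)}$ for $k=0,\dots,m-1$, and differentiating once more gives $y^{(m)}=\phi+\sum_i c_i u_i^{(m)}$.

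With these derivatives in hand the verification is routine. Since $\rmL u_i\equiv 0$,
\[
\rmL y=y^{(m)}+\sum_{j=0}^{m-1}w_j\,y^{(j)}=\phi+\sum_{i=1}^m c_i\,(\rmL u_i)=\phi
\]
almost everywhere, and $y^{(j)}(a)=\sum_i c_i(a)\,u_i^{(j)}(a)=0$ for $j=0,\dots,m-1$ because $c_i(a)=0$; moreover $y^{(m)}\in L^2[a,b]$, so $y\in{\cal{H}}^m[a,b]$. Then $g:=f-y$ lies in ${\cal{H}}^m[a,b]$, satisfies $\rmL g=\rmL f-\phi=0$ almost everywhere, and has $g^{(j)}(a)=0$ for all $j$. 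By Theorem \ref{thm:A:inner} with $t_0=a$, $g\equiv 0$, i.e.\ $f=y$, which is precisely $f(t)=\int_a^b G(t,u)\,(\rmL f)(u)\,du$.

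The step I expect to demand the most care is the differentiation of $y$ when the forcing term $\phi$ is only square integrable: establishing $y^{(k)}=\sum_i c_i u_i^{(k)}$ rigorously rests on the cancellation identity above together with a careful justification of term-by-term differentiation of products of absolutely continuous and smooth functions under a variable upper limit. Once that bookkeeping is settled, the computation $\rmL y=\phi$, the boundary conditions, and the appeal to Theorem \ref{thm:A:inner} all go through mechanically.
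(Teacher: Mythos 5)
Your proof is correct and follows essentially the same route as the paper: the paper isolates your variation-of-parameters computation as a separate result (Theorem \ref{thm:A:Greens_functionb}, proved by the same induction using the cancellation $\sum_i u_i^{(k)}(t)u_i^*(t)=[{W}(t)^{-1}{W}(t)]_{m,k+1}$), and then, exactly as you do, concludes $r=f$ by applying the uniqueness result of Theorem \ref{thm:A:inner} with $t_0=a$ to $r-f$. Your handling of the $L^2$ claim also matches the paper's (and is in fact slightly more careful, since for $m=1$ the kernel $G(t,\cdot)$ has a jump at $u=t$ and is only piecewise continuous, not continuous as the paper asserts).
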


The following theorem will be useful in the proof of Theorem \ref{thm:A:Greens_functiona}.

\begin{theorem}
\label{thm:A:Greens_functionb}
Let $G$ be as in (\ref{eq:Greens}) and suppose that $h \in {\cal{L}}_2$. If
\[
r(t) = \int^b_a G(t, u) ~h(u) ~du
\]
Then
\begin{equation}
\label{eq:r1}
r \in {\cal{H}}^m[a,b], 
\end{equation}
\begin{equation}
\label{eq:r2}
(\rmL r)(t) = h(t) \quad \text{almost everywhere} ~ t \in [a, b] 
\end{equation}
and
\begin{equation}
\label{eq:r3}
r^{(j)}(a) = 0 \quad j = 0,\cdots, m-1. 
\end{equation}
\end{theorem}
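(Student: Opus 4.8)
The plan is to make the structure of $r$ explicit by pulling the $t$-dependence out of the integral. Since $G(t,u)=\sum_{i=1}^m u_i(t)u_i^{*}(u)$ for $u\le t$ and vanishes otherwise, I would write
\[
r(t)=\int_a^t \sum_{i=1}^m u_i(t)\,u_i^{*}(u)\,h(u)\,du=\sum_{i=1}^m u_i(t)\,c_i(t),\qquad c_i(t):=\int_a^t u_i^{*}(u)\,h(u)\,du.
\]
Because each $u_i^{*}$ is continuous (by the preceding Lemma) and $h\in{\cal{L}}_2$ is integrable on the finite interval $[a,b]$, each $c_i$ is absolutely continuous with $c_i'(t)=u_i^{*}(t)h(t)$ almost everywhere, and $c_i(a)=0$. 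Also, since $\rmL u_i\equiv 0$, we have $u_i^{(m)}=-\sum_{j=0}^{m-1}w_j u_i^{(j)}$, which is continuous, so each $u_i\in C^m[a,b]$.

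The crux is an algebraic identity coming from the Wronskian. With $[{W}(t)]_{ij}=u_i^{(j-1)}(t)$ and $(u_1^{*},\dots,u_m^{*})$ the last row of ${W}(t)^{-1}$, reading off row $m$ of ${W}(t)^{-1}{W}(t)=I$ gives
\[
\sum_{i=1}^m u_i^{*}(t)\,u_i^{(k)}(t)=\delta_{k,\,m-1},\qquad k=0,1,\dots,m-1,
\]
i.e. this sum is $0$ for $k\le m-2$ and $1$ for $k=m-1$. I would then differentiate $r=\sum_i u_i c_i$ repeatedly using the product rule for absolutely continuous functions. At each stage the term produced by differentiating $c_i$ is $\big(\sum_i u_i^{(k)}(t)u_i^{*}(t)\big)h(t)$, which vanishes for $k\le m-2$, so by induction
\[
r^{(k)}(t)=\sum_{i=1}^m u_i^{(k)}(t)\,c_i(t),\qquad k=0,1,\dots,m-1,
\]
and one more differentiation picks up the $k=m-1$ identity, giving
\[
r^{(m)}(t)=\sum_{i=1}^m u_i^{(m)}(t)\,c_i(t)+h(t)\quad\text{a.e.}
\]

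From these formulas the three conclusions follow quickly. For \eqref{eq:r3}, $c_i(a)=0$ gives $r^{(j)}(a)=\sum_i u_i^{(j)}(a)c_i(a)=0$ for $j=0,\dots,m-1$. For \eqref{eq:r1}, each $r^{(k)}$ ($k\le m-1$) is a finite sum of products of a $C^1$ function $u_i^{(k)}$ with the absolutely continuous $c_i$, hence absolutely continuous, and $r^{(m)}=\sum_i u_i^{(m)}c_i+h$ lies in ${\cal{L}}_2$ since $u_i^{(m)}$ and $c_i$ are continuous (bounded) and $h\in{\cal{L}}_2$; thus $r\in{\cal{H}}^m[a,b]$. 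For \eqref{eq:r2},
\[
(\rmL r)(t)=r^{(m)}(t)+\sum_{j=0}^{m-1}w_j(t)\,r^{(j)}(t)=h(t)+\sum_{i=1}^m c_i(t)\,(\rmL u_i)(t)=h(t),
\]
using $\rmL u_i\equiv 0$. I expect the main obstacle to be the bookkeeping in the inductive differentiation: one must justify the product rule in the almost-everywhere/absolutely-continuous sense and verify that the Wronskian identity kills every intermediate boundary term except the last, which is precisely what distinguishes $r^{(m)}$ (where $h$ appears) from the lower derivatives.
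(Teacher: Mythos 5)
Your proposal is correct and follows essentially the same route as the paper's proof: writing $r=\sum_i u_i c_i$ with $c_i(t)=\int_a^t u_i^{*}h$, differentiating inductively, and invoking the Wronskian identity $\sum_i u_i^{*}(t)u_i^{(k)}(t)=\delta_{k,m-1}$ (the paper phrases it as $[W(t)^{-1}W(t)]_{m,k+1}$) so that $h$ appears only at the $m$th derivative, after which (\ref{eq:r1})--(\ref{eq:r3}) follow exactly as you state. Your version is in fact slightly more careful than the paper's, which presents only the $j=1$ step explicitly and leaves the general induction and the absolute-continuity justification of the product rule to the reader.
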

\begin{proof} Write
\[
r(t) = 
\sum_{i=1}^m u_i(t) \int^t_a u^{*}_i(u) ~h(u) ~du
\]
We'll first show that
\begin{equation}
\label{eq:r4}
r^{(j)}(t) =
\sum_{i=1}^m u^{(j)}_i (t) \int^t_a u^{*}_i(u)~ h(u)~ du \quad j = 0,\cdots, m-1 
\end{equation}
and
\begin{equation}
\label{eq:r5}
r^{(m)}(t) = h(t) + \sum_{i=1}^m u^{(m)}_i (t) \int^t_a u^{*}_i(u)~ h(u) ~du \quad 
\text{almost everywhere} ~ t \in [a, b]. 
\end{equation}
These equations follow easily by induction on $j$. We only present the case
$j = 1$. Then
\[
r'(t) = \sum_{i=1}^m u'_i (t)\int^t_a u^{*}_i(u)~ h(u) ~du 
+ \sum_{i=1}^m u_i (t) \frac{d}{dt}\left[\int^t_a u^{*}_i(u)~ h(u) ~du\right]. 
\]
Since $u^{*}_i$ and $h$ are in ${\cal{L}}_2$,
\[
\sum_{i=1}^m u_i(t) \frac{d}{dt} \left[\int^t_a u^{*}_i(u)~ h(u) ~du\right]
 =\sum_{i=1}^m u_i(t)u^{*}_i(t) h(t)
\]
almost everywhere $t$. But, by definition of ${W}$ and the $u^{*}_i$'s, this is equal to
\[
h(t) \sum_i [ {W}(t)]_{i1} [{W}(t) ^{-1}]_{mi} 
= h(t) ~[{W}(t)^{-1} {W}(t)]_{m1} = h(t)~ {\rm{I}} \{ m = 1 \}.
\]
Therefore, for $m = 1$,
(\ref{eq:r5}) holds and for $m > 1$ (\ref{eq:r4}) holds when $j = 1$. For $m > 1$ and $j >1$,
we can calculate derivatives of $r$ up to order $m-1$, and can calculate the
$m$th derivative almost everywhere to prove (\ref{eq:r4}) and (\ref{eq:r5}). Clearly, the $m$th
derivative in (\ref{eq:r5}) is square-integrable. Therefore we've proven (\ref{eq:r1}).

To prove (\ref{eq:r2}), use (\ref{eq:r4}) and (\ref{eq:r5}) and write
\[
(Lr)(t) = r^{(m)}(t) +
\sum^{m -1}_{j=0}\omega_j(t) r^{(j)}(t)
\]

\[
= h(t) + \sum^m_{i=1} u^{(m)}_i (t) \int^t_a u^{*}_i (u) ~h(u)~ du +
\sum^{m -1}_{j=0}\sum^m_{i=1} \omega_j(t) u^{(j)}_i (t) 
\int^t_a u^{*}_i (u) ~h(u)~ du
\]

\[
= h(t) + \sum^m_{i=1} 
  \left[
  u^{(m)}_i (t) + \sum^{m -1}_{j=0}\sum^m_{i=1} \omega_j(t) u^{(j)}_i (t) 
  \right]
  \int^t_a u^{*}_i (u) ~h(u)~ du
\]

\[
= h(t) + \sum_{i=1}^m (\rmL u_i)(t) \int^t_a u^{*}_i (u) ~h(u)~ du = h(t)
\]
since L$u_i \equiv 0$.

Equation (\ref{eq:r3}) follows directly from (\ref{eq:r4}) by taking $t = a$.
\end{proof}

\begin{proof}[Proof of 
Theorem \ref{thm:A:Greens_functiona}]
First consider the function in equation (\ref{eq:Greens}) as a function of $u$ with $t$ fixed.  Since
the $u_i$'s are continuous and $W(u)$ is invertible for all $u$,  $G(t,\cdot)$ is continuous on the
finite closed interval $[a,b]$.  Thus it is in $L^2[a,b]$.  

To show that equation (\ref{eq:Greensint}) holds, 
 let $f \in {\cal{H}}^m$ satisfy the boundary conditions
 (\ref{eq:boundary}). Define $r(t) = \int_a^b G(t, u)~ (\rmL f)(u)~ du$. Then, by Theorem 
 \ref{thm:A:Greens_functionb}, L$r$ = L$f$ almost everywhere and $r^{(j)}(a) = 0,
j = 0,\cdots, m -1$. Thus L$(r-f )$ = 0 almost everywhere and $(r-f)^{(j)}(a) =
0, j = 0,\cdots, m-1$. By Theorem \ref{thm:A:inner}, $r-f$ is the zero function, that is $r = f$ .
\end{proof}

%\section*{Acknowledgements}

%%%%%%%%%%%%%%%%%%%%%%%%%%%%%%%%%%%%%%%%

\end{document}